\newtheorem{definition}{Definition}
\newtheorem{example}{Example}
\newtheorem{theorem}{Theorem}
\newtheorem{corollary}{Corollary}
\newtheorem{observation}{Observation}
\newcommand{\np}{\ensuremath{\mathrm{NP}}}
\newcommand{\fpt}{\ensuremath{\mathrm{FPT}}}
\newcommand{\wone}{\ensuremath{\mathrm{W[1]}}}
\newcommand{\wtwo}{\ensuremath{\mathrm{W[2]}}}
\newcommand{\w}{\ensuremath{\mathrm{W}}}
\newcommand{\p}{\ensuremath{\mathrm{P}}}
\newcommand{\orderedsetof}[2]{\ensuremath{\{#1_1, #1_2, \ldots, #1_#2\}}}
\newcommand{\namedorderedsetof}[3]{\ensuremath{#1=\orderedsetof{#2}{#3}}}
\newcommand{\canddef}{\namedorderedsetof{C}{c}{m}}
\newcommand{\votersdef}{\namedorderedsetof{V}{v}{n}}
\newcommand{\calR}{\ensuremath{{{\mathcal{R}}}}}
\newcommand{\pos}{\ensuremath{{{\mathrm{pos}}}}}
\DeclareMathOperator{\dec}{dec}
\DeclareMathOperator{\req}{req}
\DeclareMathOperator{\price}{price}
\DeclareMathOperator{\dpos}{dpos}
\DeclareMathOperator{\score}{score}
\DeclareMathOperator{\maj}{maj}
\DeclareMathOperator{\preffunc}{pref}
\newcommand{\clique}{\textsc{Clique}}
\newcommand{\dsb}{\textsc{Destructive Shift Bribery}}
\title{Algorithms for Destructive Shift Bribery\thanks{Early version of this
paper was presented at the AAMAS 2016 conference.}}
\author{
%
Andrzej Kaczmarczyk\\
Technische Universit\"at Berlin\\
Berlin, Germany
\and
Piotr Faliszewski\\
AGH University\\
Krak\'{o}w, Poland
}
\begin{document}

\maketitle

\begin{abstract}
  We study the complexity of \textsc{Destructive Shift Bribery}. In this
  problem, we are given an election with a set of candidates and a set of voters
  (each ranking the candidates from the best to the worst), a despised candidate
  $d$, a budget $B$, and prices for shifting $d$ back in the voters' rankings.
  The goal is to ensure that $d$ is not a winner of the election. We show that
  this problem is polynomial-time solvable for scoring protocols (encoded in
  unary), the Bucklin and Simplified Bucklin rules, and the Maximin rule, but is
  $\np$-hard for the Copeland rule. This stands in contrast to the results for
  the constructive setting (known from the literature), for which the problem is
  polynomial-time solvable for $k$-Approval family of rules, but is $\np$-hard
  for the Borda, Copeland, and Maximin rules. We complement the analysis of the
  Copeland rule showing \w-hardness for the parameterization by the budget
  value, and by the number of affected voters. We prove that the problem is
  \w-hard when parameterized by the number of voters even for unit prices. From
  the positive perspective we provide an efficient algorithm for solving the
  problem parameterized by the combined parameter the number of candidates and
  the maximum bribery price (alternatively the number of different bribery
  prices).
\end{abstract}



\sloppy

\section{Introduction}

We study the complexity of the destructive variant of the \textsc{Shift Bribery}
problem. We consider the family of all (unary encoded) scoring protocols
(including the Borda rule and all $k$-Approval rules) and the Bucklin, Simplified
Bucklin, Copeland, and Maximin rules.  It turns out that for all of them---except
for the Copeland rule---the problem can be solved in polynomial time. This
stands in sharp contrast to the constructive case, where the problem is
$\np$-hard~\cite{elk-fal-sli:c:swap-bribery} (and hard in the parameterized
sense~\cite{bre-che-fal-nic-nie:j:prices-matter}) for the Borda, Copeland, and
Maximin rules (however, \textsc{Shift Bribery} is in $\p$ for the $k$-Approval
family of rules and the Simplified Bucklin and Bucklin
rules~\cite{elk-fal-sch:j:fallback-shift}).

The \textsc{Shift Bribery} problem was introduced by Elkind et
al.~\cite{elk-fal-sli:c:swap-bribery} to model (a kind of) campaign
management problem in elections. The problem is as follows: We are
given an election, that is, a set of candidates and a set of voters that
rank the candidates from the most to the least desirable one, a
preferred candidate $p$, a budget $B$, and the costs for shifting $p$
forward in voters' rankings.  Our goal is to decide if it is possible to
ensure $p$'s victory by shifting $p$ forward, without exceeding the budget.
In this paper we study the destructive variant of the problem, where the
goal is to ensure that a given despised candidate $d$ does not win the
election, by shifting him or her back in voters' rankings (but, again,
each shift comes at a cost and we cannot exceed the given budget).

Studying destructive variants of election problems, where the goal is to change
the current winner (such as
manipulation~\cite{con-lan-san:j:when-hard-to-manipulate},
control~\cite{hem-hem-rot:j:destructive-control}, and
bribery~\cite{fal-hem-hem-rot:j:llull,mag-riv-she-wag:c:stv-bribery,car:c:margin-of-victory,xia:margin-of-victory}),
is a common practice in computational social choice, but our setting is somewhat
special. So far, all the destructive variants of the problems were defined by
changing the goal from ``ensure the victory of the preferred candidate'' to
``preclude the despised candidate from winning,'' but the set of available
actions remained unaffected (e.g., in both the constructive and the destructive
problem of control by adding voters, we can add some voters from a given pool of
voters and in both the constructive and the destructive bribery problem, we can
pay voters to change their votes in some way). In our case, we feel that it is
natural to divert from this practice and change the ability of ``shifting the
distinguished candidate forward'' to the ability of ``shifting the distinguished
candidate back''. Below we explain why.

If, when defining our destructive problem, we stuck with the ``ability
to shift forward,'' as in the \textsc{Constructive Shift Bribery} problem,
we would get the following problem: Ensure that a given despised
candidate does not win the election by shifting him or her forward in
some of the votes (without exceeding the budget). However, for a
monotone voting rule, if a candidate is already winning an election,
then shifting him or her forward certainly cannot preclude him or her
from winning. This would make our problem interesting only for a
relatively small set of nonmonotone rules.\footnote{Nonetheless, there are
 interesting nonmonotone rules, such as the single transferable rule (STV) and
 the Dodgson rule. It is also quite common for multiwinner rules to not be
 monotone (see the works of Elkind et
 al.~\cite{elk-fal-sko-sli:j:multiwinner-properties} and Faliszewski et
 al.~\cite{FSST16}), so studying this variant of the \textsc{Shift Bribery}
problem for multiwinner voting rules may be interesting.}
Further, the problem certainly would not be modeling what one would intuitively
think of as negative, destructive campaigning. Thus, while it certainly would be
interesting to study how to exploit nonmonotonicity of rules such as STV and
Dodgson (or various multiwinner rules) to preclude someone from winning, it
would not be the most practical way of defining \textsc{Destructive Shift
Bribery}.\medskip

\subsection{Related Work}
In recent years, \textsc{Constructive Shift Bribery} received quite
some attention.  The problem was defined by Elkind et
al.~\cite{elk-fal-sli:c:swap-bribery}, as a simplified variant of
\textsc{Swap Bribery} (which itself received some attention, for example, in
the works of Bredereck et al.~\cite{BFKNST17}, Dorn and
Schlotter~\cite{dor-sch:j:parameterized-swap-bribery}, Faliszewski et
al.~\cite{fal-rei-rot-sch:j:bucklin-fallback}, Knop et
al.~\cite{KKM17}, and papers regarding combinatorial domains, such as
those of Mattei et al.~\cite{mat-pin-ros-ven:j:cp-bribery} and Dorn
and Kr\"uger~\cite{dor-kru:j:cp-bribery}; importantly, Shiryaev et
al.~\cite{shir-yu-elkind:c:robust} studied a variant of
\textsc{Destructive Swap Bribery} and we comment on their work later).
Elkind et al.~\cite{elk-fal-sli:c:swap-bribery} have shown that
\textsc{Constructive Shift Bribery} is $\np$-hard for the Borda,
Copeland, and Maximin voting rules, but polynomial-time solvable for
the $k$-Approval family of rules. They also gave a $2$-approximation
algorithm for the case of Borda, which was later generalized to the
case of all scoring rules by Elkind and
Faliszewski~\cite{elk-fal:c:shift-bribery}. Chen et
al.~\cite{bre-che-fal-nic-nie:j:prices-matter} considered parametrized
complexity of \textsc{Constructive Shift Bribery}, and have shown a
varied set of results (in general, parametrization by the number of
positions by which the preferred candidate is shifted tends to lead to
FPT algorithms, parametrization by the number of affected voters tends
to lead to hardness results, and parametrization by the available
budget gives results between these two extremes). Recently, Bredereck
et al.~\cite{bre-fal-nie-tal:j:csb} studied the complexity of
\textsc{Combinatorial Shift Bribery}, where a single shift action can
affect several voters at a time. Their paper is quite related to ours,
because it was the first one in which shifting a candidate backward
was possible (albeit as a negative side effect, since the authors
studied the constructive setting).  Very recently, Maushagen et
al.~\cite{mau-nev-rot-sel:c:dsb} studied both constructive and
destructive shift bribery (using our model with backward shifts) for
the case of round-based rules such as STV (there referred as Hare),
Coombs, Baldwin, and Nanson.

\textsc{Shift Bribery} belongs to the family of bribery-related
problems, that were first studied by Faliszewski et
al.~\cite{fal-hem-hem:j:bribery} (see also the work of Hazon et
al.~\cite{haz-lin-kra:c:bribery} for a similar problem), and that
received significant attention within computational social choice
literature (see the survey of Faliszewski and
Rothe~\cite{fal-rot:b:bribery}). Briefly put, in the regular
\textsc{Bribery} problem the goal is to ensure that a given candidate
is a winner of an election by modifying---in an arbitrary way---up to
$k$ votes, where $k$ is part of the input.  Many types of bribery
problems were already studied, including---in addition to
\textsc{Swap} and \textsc{Shift Bribery}---\textsc{Support
  Bribery}~\cite{elk-fal-sch:j:fallback-shift}, \textsc{Extension
  Bribery}~\cite{bau-fal-lan-rot:c:lazy-voters,fal-rei-rot-sch:j:bucklin-fallback},
and others (e.g., in judgment
aggregation~\cite{bau-erd-rot:c:bribery-ja}, and in the setting of
voting in combinatorial
domains~\cite{mat-pin-ros-ven:j:cp-bribery,dor-kru:j:cp-bribery,mar-mau-pin-ros-ven:cp-bribery}). However,
from our point of view the most interesting variant of the problem is
\textsc{Destructive Bribery}, first studied by Faliszewski et
al.~\cite{fal-hem-hem-rot:j:llull} and also,  independently, by Magrino et
al.~\cite{mag-riv-she-wag:c:stv-bribery} and
Cary~\cite{car:c:margin-of-victory} under the name
\textsc{Margin of Victory} (this problem was  also studied by Xia~\cite{xia:margin-of-victory} and Dey and
Narahari~\cite{dey-nar:c:margin-of-victory}). The idea behind  \textsc{Margin of
  Victory} is that it can be very helpful in validating
election results: If it is possible to change the election result by
changing (bribing) relatively few votes, then one may suspect
that---possibly---the election was tampered with.  Bribery problems
are also related to lobbying
problems~\cite{chr-fel-ros-sli:j:lobbying,erd-fer-gol-mat-rei-rot:j:probabilistic-lobbying,neh:c:lobbying}.

A variant of \textsc{Destructive Swap Bribery} was studied by Shiryaev
et al.~\cite{shir-yu-elkind:c:robust} in their work on testing how
robust are winners of given elections. They presented an analysis of
the  case where every swap has a unit
price and they showed that the problem is easy for scoring protocols
and for the Condorcet rule.\footnote{The authors' definition of the
  Condorcet rule slightly differs from a standard one usually seen in
  the literature. They assume that if there is no unique Condorcet
  winner the rule returns the whole set of candidates instead of an
  empty set (well established as a return value for this case in the
  literature.)}  This work is very closely related to ours, but the
definition of the problem is somewhat different (more general types of
swaps but less general price functions). Very recently Bredereck et
al.~\cite{BFKNST17} studied an analogous setting (\textsc{Destructive
  Swap Bribery} with unit prices) for the case of multiwinner voting
rules.


\subsection{Our Contribution.}
We believe that \dsb{} is worth studying for three main reasons.
First, it simply is a natural variant of the \textsc{Constructive
  Shift Bribery} problem and as the constructive variant received
significant attention, we feel that it is interesting to know how the
destructive variant behaves (indeed, the work of Maushagen et
al.~\cite{mau-nev-rot-sel:c:dsb} is a sign that the destructive
setting also attracts attention).  Second, it models natural negative
campaigning actions, aimed at decreasing the popularity of a given
candidate.  Third, it serves a similar purpose as the \textsc{Margin
  of Victory}
problem~\cite{mag-riv-she-wag:c:stv-bribery,xia:margin-of-victory}: If
it is possible to preclude a given candidate from winning through a
low-cost destructive shift bribery, then it can be taken as a signal
that the election might have been tampered with, or that some agent performed a
possibly illegal form of campaigning.
Below we summarize the main contributions of this paper:
\begin{enumerate}
\item We define the \textsc{Destructive Shift Bribery} problem and
  justify why a definition that diverts from the usual way of defining
  destructive election problems is appropriate in this case.

\item We show that \dsb{} is a
  significantly easier problem than \textsc{constructive shift bribery}. To
  this end, we show polynomial time algorithms for \dsb{} for all scoring
  rules with unary encoded scores (including the Borda
  rule and $k$-Approval family of rules), Simplified Bucklin,
  Bucklin, and Maximin.

\item We show that in spite of our easiness results, there still are
  voting rules for which the problem is computationally hard. We
  exemplify this by proving $\np$-hardness and $\w[1]$-hardness (for
  several parameters) for the case of Copeland$^\alpha$ family of
  rules.
\end{enumerate}


The paper is organized as follows.  In Section~\ref{sec:prelim}, we
formally define elections, present the voting rules used in the paper,
define our problem, and briefly review necessary notions regarding
parametrized complexity. In Section~\ref{sec:results}, we present our
results, with one subsection for each of the studied rules.  We
conclude in Section~\ref{sec:conclusions} with an overview of our
work, tables of results, and suggestions for future research.

\section{Preliminaries}\label{sec:prelim}

For each positive integer $t$, by $[t]$ we mean the set $\{1, 2,
\ldots, t\}$.  We assume that the reader is familiar with standard
notions regarding algorithms and complexity theory, as presented in
the textbook of Papadimitriou~\cite{pap:b:complexity}.

\subsection{Elections and Election Rules}
An election is a pair $E = (C,V)$, where \canddef{} is
a set of candidates and \votersdef{} is a multiset of
voters. Each voter is associated with his or her preference order
$\succ _i$, that is, a strict ranking of the candidates from the best to
the worst (according to this voter).  For example, we may have
election $E = (C,V)$ with $C= \{ c_1, c_2, c_3\}$ and $V = \{v_1,v_2\}$,
where $v_1$ has preference order $v_1\colon c_1 \succ c_2 \succ c_3$.
If $c$ is a candidate and $v$ is a voter, we write $\pos_v(c)$ to
denote the position of $c$ in $v$'s ranking (e.g., in the preceding
example we would have $\pos_{v_1}(c_1)=1$).  Given an election $E =
(C,V)$ and two distinct candidates $c$ and $c'$, by $N_E(c,c')$
we mean the number of voters who prefer $c$ to $c'$.

An election rule $\mathcal{R}$ is a function that given an election $E
= (C,V)$ outputs a set $W \subseteq C$ of tied election winners
(typically, we expect to have a single winner, but due to symmetries
in the profile it is necessary to allow for the possibility of
ties). We use the unique-winner model, that is, we require a candidate to
be the only member of $\calR(E)$ to be considered $E$'s winner (see
the works of Obraztsova et
al.~\cite{obr-elk:c:random-ties-matter,obr-elk-haz:c:ties-matter} for
various other tie-breaking mechanisms and algorithmic consequences of
implementing them; indeed, there are situations where the choice of the
tie-breaking rule affects the complexity of election-related
problems).

We consider the following voting rules (for the description below, we
consider election $E = (C,V)$ with $m$ candidates; for each rule we
describe the way it computes candidates' scores, so that the
candidates with the highest score are the winners, unless explicitly stated otherwise):
\begin{description}
\item[Scoring protocols.] A scoring protocol is defined through vector
  $\alpha = (\alpha_1, \ldots, \alpha_m)$ of nonincreasing,
  nonnegative integers.  The $\alpha$-score of a candidate $c \in C$
  is defined as $\sum_{v \in V} \alpha_{\pos_v(c)}$. The most popular
  scoring protocols include the family of $k$-Approval rules (for each
  $k$, $k$-Approval scoring protocol is defined through a vector of $k$
  ones followed by zeros) and the Borda rule, defined by vector $(m-1,
  m-2, \ldots, 0)$. The $1$-Approval rule is known as Plurality.

\item[Bucklin and Simplified Bucklin.] A Bucklin winning round is the
  smallest value $\ell$ such that there is a candidate whose
  $\ell$-Approval score is greater or equal to $\frac{|V|}{2}+1$ (in
  other words, if there is a candidate ranked among top $\ell$
  positions by a strict majority of the voters). All candidates whose
  $\ell$-Approval score is at least this value are the winners under the
  Simplified Buckling rule.  The Bucklin score of a candidate is his
  or her $\ell$-Approval score, where $\ell$ is the Bucklin winning
  round of the election. The candidates with the highest Bucklin score
  are the Bucklin  winners (note that the set of Bucklin winners is a subset of
  the set of Simplified Bucklin winners).


\item[Copeland.] Let $\alpha$, $0 \leq \alpha \leq 1$, be a rational number. The
  Copeland$^\alpha$ score of a candidate $c$ is defined as:
  \begin{align*}
    |\{ d \in C \setminus \{c\} \colon N_E(c,d) >  N_E(d,c)\}|
    +\alpha| \{ d \in C \setminus \{c\} \colon N_E(c,d) =  N_E(d,c)\}|.
  \end{align*}  
  In other words, candidate $c$ receives one point for each candidate
  whom he or she defeats in their head-to-head contest (i.e.,
  for each candidate over whom
  $c$ is preferred by a majority of the voters) and $\alpha$ points for
  each candidate with whom $c$ ties their head-to-head contest.

\item[Maximin.] The Maximin score of candidate $c$ is defined as
  $\min_{d \in C \setminus \{c\}} N_E(c,d)$.
\end{description}

We write $\score_E(c)$ to denote the score of candidate $c$ in
election $E$ (the election rule will always be clear from the context).
\begin{definition}
 For election $E=(C,V)$, a Condorcet winner is a candidate $c \in C$ who defeats
 all the other candidates in head-to-head contests. We call an election rule
 $\calR$ Condorcet-consistent if, for every possible election $E$, it selects a
 Condorcet winner when one exists.
\end{definition}
Among all the rules we consider only Maximin and Copeland are
Condorcet-consistent. However, even though they both share this property, the
results we obtained show that they differ significantly when considered from our
problem's perspective.

\subsection{Destructive Shift Bribery}
The \textsc{Destructive Shift Bribery} problem for a given election
rule $\calR$ is defined as follows. We are given an election $E =
(C,V)$, a despised candidate $d \in C$ (typically, the current
election winner), a budget $B$ (a nonnegative integer), and the prices
for shifting $d$ backward for each of the voters (see below). The goal
is to ensure that $d$ is not the unique $\calR$-winner of the election
by shifting him or her backward in the voters' preference orders
without exceeding the budget.

We model the ``prices for shifting $d$ backward'' as destructive
shift-bribery price functions. Let us fix an election $E=(C,V)$, with
\canddef{}, \votersdef{}, and let the despised
candidate be $d$. Let $v$ be some voter and let $j = \pos_v(d)$.
Function $\rho \colon \mathbb{N} \rightarrow \mathbb{N} \cup
\{+\infty\}$ is a destructive shift-bribery price function for voter
$v$ if it satisfies the following conditions:
\begin{enumerate}
\item $\rho (0) = 0$,
\item for each two $i, i'$, $i<i' \leq m-j$, it holds that $\rho (i)
  \leq \rho(i')$ for $i<i' \leq m-j$, and
\item $\rho(i) = + \infty$ for $i>m-j$ 
\end{enumerate}
For each $i$, we interpret value $\rho (i)$ as the price of shifting
$d$ back by $i$ positions in $v$'s preference order.  Value $+ \infty$
is used for the cases where shifting $d$ back by $i$ positions
is impossible (due to $d$'s position in the vote). 
We assume that in
each instance, the price functions are encoded by simply listing their
values for all arguments for which they are not $+\infty$.

For our hardness results, we focus on the case of unit price
functions, where, for each voter, shifting the despised candidate
backward by $i$ positions costs $i$ units of the budget. Sometimes we
also consider all-or-nothing price functions, where for each voter $v$
there is a value $c_v$ (which can also be set to $\infty$) such that
the cost of shifting the despised candidate $i$ positions backward
always costs $c_v$, irrespective of $i$ (except for $i=0$, where the
cost is zero by definition).

\begin{example}
Let us consider an instance of \dsb{}
for the Borda rule with the following election:
\begin{align*}
  v_1 \colon & b \succ a \succ c \succ d, &
  v_2 \colon & d \succ b \succ a \succ c, &
  v_3 \colon & d \succ c \succ a \succ b, &
  v_4 \colon & d \succ a \succ b \succ c. 
\end{align*}
We take $d$ to be the despised candidate, and assume that we have unit
price functions. In this election, candidate $d$ wins with $9$ points,
and the second-best candidate, $a$, has $6$ points. However, if we
shifted $d$ two positions back in $v_4$'s preference order, then $d$'s
score would decrease to $7$ and $a$'s score would increase to
$7$. Thus, $d$ would no longer be the unique winner. In consequence,
for $B = 2$ we have a ``yes''-instance of \dsb{}.  On the other hand,
shifting $d$ back by one position only (in whichever vote) cannot
preclude $d$ from winning. So, for $B = 1$ we have a ``no''-instance.
\end{example}

\subsection{Parameterized Complexity}
In the theory of parameterized complexity, the goal is to study the
computational difficulty of problems with respect to both their input
length, as in classic computational complexity theory, and some
additional ``parameters.''  For example, for \dsb{} problem, the
parameters may be the numbers of voters or candidates in the input
election, the budget values, the maximum numbers of shifts available
for a bribery etc. (we take the parameters to be numbers). For a
parameter $k$, we say that a problem parametrized by $k$ is
fixed-parameter tractable (is in $\fpt$) if there is an algorithm that
solves it in time $f(k) \cdot|I|^{O(1)}$, where $|I|$ is the length of
the encoding of a given instance and $f$ is an arbitrary computable
function (that depends on the parameter value only). Intuitively, if
a problem is in $\fpt$ for some parameter $k$, then we can hope that
its instances where $k$ is small can be solved efficiently.

Parametrized complexity theory also offers a theory of intractability.
In particular, it is widely believed that if a problem is \wone{}-hard
with respect to some parameter $k$, then there is no $\fpt$ algorithm
for this problem for this parametrization.  The original definition of
the class $\wone$ is quite involved and it is currently common to
define the class by providing one of its complete problems ($\clique$
parametrized by the size of the clique is the most common example) and
the notion of a parametrized reduction. In our case the situation is
even simpler: All our $\wone$-hardness proofs give polynomial-time
many-one reductions from well-known $\wone$-hard problems and
guarantee that the values of the parameters in the reduced-to
instances depend only on the values of the parameter in the
reduced-from instances (in other words, our proofs do not use full
power of parametrized reductions).



We point readers interesting in more detailed treatments of
parametrized complexity theory to the textbooks of
Niedermeier~\cite{nie:b:invitation-fpt} and Cygan et
al.~\cite{cyg-fom-kow-lok-mar-pil-pil-sau:b:fpt}.

\section{Results}\label{sec:results}
In this section we present our main results. We show polynomial-time
algorithms for the $k$-Approval family of rules, the Borda rule, all
scoring protocols (provided that they are encoded in unary or that the
destructive shift bribery price functions are encoded in unary), the
Bucklin family of rules, and the Maximin rule.  For the
Copeland$^\alpha$ family of rules, we prove $\np$-hardness and several
results regarding its parameterized complexity.

As a warm-up, we start with an observation regarding an upper bound of
the complexity of the \dsb{} problem.
\begin{observation}
  \label{obs:np-membership}
  The \dsb{} problem is in \np{} for every voting rule for which winner
  determination is in \p{}.
\end{observation}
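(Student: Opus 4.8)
The plan is to use the standard guess-and-verify characterization of \np{}: a language is in \np{} exactly when every ``yes''-instance admits a certificate of polynomial size that can be checked in polynomial time. For \dsb{} the natural certificate is a description of the bribery action itself, that is, a vector $(s_1, \ldots, s_n)$ of nonnegative integers in which $s_i$ records by how many positions the despised candidate $d$ is shifted back in the preference order of voter $v_i$.

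First I would confirm that this certificate is polynomially bounded. Since each vote ranks all $m$ candidates, shifting $d$ back by more than $m-1$ positions is never possible; in fact the price function $\rho_i$ of voter $v_i$ satisfies $\rho_i(s) = +\infty$ whenever $s > m - \pos_{v_i}(d)$. Hence we may assume $0 \le s_i \le m-1$ for every $i$, so each entry needs only $O(\log m)$ bits and the whole certificate has size $O(n\log m)$, which is polynomial in the input length. The verifier then (i) evaluates the total cost $\sum_{i=1}^n \rho_i(s_i)$ and checks that it does not exceed the budget $B$, (ii) builds the election $E'$ obtained from $E$ by performing the prescribed shifts, and (iii) runs the winner-determination procedure of $\calR$ on $E'$ and accepts if and only if $d$ is not the unique \calR-winner of $E'$. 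Steps (i) and (ii) are plainly polynomial, since the price functions are listed explicitly in the input and each vote is altered by a single local shift.

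The only step that could, a priori, be expensive is (iii), and this is precisely where the hypothesis of the observation is used: by assumption the winners under $\calR$ can be computed in polynomial time, so the verification runs in polynomial time overall. I do not expect a genuine obstacle here; the one point that warrants a line of care is the polynomial bound on the certificate, which relies on the structure of the shift-bribery price functions (the $+\infty$ values for infeasible shifts) rather than on treating the bribery as an arbitrary, potentially large object.
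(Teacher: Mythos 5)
Your proposal is correct and follows essentially the same guess-and-verify argument as the paper: the certificate is the vector of backward shifts, and the verifier checks the cost against the budget and runs the polynomial-time winner-determination procedure. You merely spell out the polynomial bound on the certificate size more explicitly than the paper does, which is a harmless elaboration rather than a different route.
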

\begin{proof}
  Given an instance of the problem, we guess in which votes to shift
  the despised candidate and by how many positions. Then we check if
  the cost of these shifts does not exceed the budget and if
  implementing them ensures that the despised candidate is not the
  unique winner of the election.
\end{proof}

Every voting rule that we consider in this paper is polynomial-time
computable and, thus, Observation~\ref{obs:np-membership} guarantees
that the \dsb{} problem for each of our rules is in $\np$ (in
particular, all our $\np$-hardness proofs in fact show
$\np$-completeness).

\subsection{The k-Approval Family of Rules}
We start with the $k$-Approval family of rules. In this case, our
algorithm is very simple: If $d$ is the despised candidate then in
each vote we should either not shift $d$ at all or shift him or her
from one of the top $k$ positions (where each candidate receives a
single point) to the $(k+1)$-st position (where he or she would
receive no points), in consequence also shifting the candidate
previously at the $(k+1)$-st position one place forward (to receive a
point). Choosing which action to do for each particular voter is easy
via the following greedy/brute-force algorithm (our algorithm is based
on a similar idea as that of Elkind et
al.~\cite{elk-fal-sli:c:swap-bribery} for the constructive case).

\begin{theorem}\label{thm:k-approval}
  For each $k \in \mathbb{N}$, the \textsc{Destructive Shift Bribery}
  problem for the $k$-Approval rule is in $\p$.
\end{theorem}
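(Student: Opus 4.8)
The plan is to reduce the problem to, for each potential ``spoiler'' candidate $c \neq d$, computing the cheapest bribery that makes $c$'s final score at least $d$'s final score, and then accepting if the minimum of these costs over all $c$ does not exceed $B$. First I would observe that shifting $d$ matters only when it crosses the boundary between positions $k$ and $k+1$: in any vote, shifting $d$ from a top-$k$ position to position $k+1$ removes exactly one point from $d$ and hands one point to the candidate that sat at position $k+1$, whereas shifting $d$ within the top $k$, shifting it when it is already ranked below position $k$, or shifting it past position $k+1$ is either useless or strictly wasteful. Hence I may assume without loss of generality that every elementary action shifts $d$ to position exactly $k+1$ in a vote where $d$ was ranked among the top $k$, at price $\rho_v(k+1-\pos_v(d))$.

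Next, fixing a target $c$, I would classify the votes in which $d$ is ranked in the top $k$ into two types: type~A, where $c$ occupies position $k+1$ (so shifting $d$ both costs $d$ a point and gives $c$ a point), and type~B, where it does not (so shifting $d$ only costs $d$ a point). Writing $g = \score_E(d) - \score_E(c)$ for the initial gap, a set of shifts using $|S_A|$ type-A and $|S_B|$ type-B votes makes $c$ catch up precisely when $2|S_A| + |S_B| \geq g$. This is a min-cost covering problem with item ``coverages'' $2$ and $1$, which I would solve exactly by brute-forcing the number $x$ of type-A votes used, from $0$ up to the number available, always taking the $x$ cheapest type-A prices and then the $\max(0,\, g-2x)$ cheapest type-B prices via precomputed prefix sums; the minimum total over all $x$ is the optimal cost for target $c$.

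For correctness I would argue both directions. If the instance is a ``yes''-instance, then after a budget-respecting bribery (normalized as above) some candidate $c$ satisfies $\score(c) \geq \score(d)$ in the resulting election; the induced type-A/type-B counts then witness $2|S_A|+|S_B| \geq g$ at cost at most $B$, so the per-$c$ optimum is at most $B$. Conversely, implementing any per-$c$ solution yields $c$ tying or beating $d$, and the incidental points handed to arbitrary third candidates in type-B votes only help, so $d$ is not the unique winner. Ranging over all $m-1$ choices of $c$, with one sort and prefix sums per candidate, gives an overall polynomial-time algorithm.

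I expect the main obstacle to be the correctness of restricting attention to a single spoiler $c$ at a time: one must verify that an optimal global bribery can be ``charged'' to whichever candidate actually ends up catching $d$, and that the type-B shifts (which boost unrelated third candidates) never disturb this accounting nor the invariant that $d$'s score drops by exactly the number of shifts. A minor edge case to dispatch separately is $k \geq m$, where every candidate is always approved and $d$ can never be the unique winner.
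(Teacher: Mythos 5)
Your proposal is correct and follows essentially the same route as the paper's proof: iterate over each potential spoiler $c$, normalize all shifts to moves from a top-$k$ position to position $k+1$, split the relevant votes into those where $c$ sits at position $k+1$ (closing the gap by $2$) and the rest (closing it by $1$), and greedily pick the cheapest votes of each type while brute-forcing one count. Your variant of deriving the number of type-B shifts as $\max(0, g-2x)$ instead of guessing both counts is a minor streamlining, not a different argument.
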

\begin{proof}
  Let $E = (C,V)$ be the input election with \canddef{} and \votersdef{},
  let $d=c_1$ be the despised
  candidate, let $B$ be the budget, and let \orderedsetof{\rho}{n}
  be the destructive shift-bribery price functions for the voters.
  Our algorithm works as follows.

  For every candidate $c \in C \setminus \{d\}$, we test if it is
  possible to guarantee that the score of $c$ is at least as high as
  that of~$d$, by spending at most $B$ units of budget, as follows:
  \begin{enumerate}
  \item We partition the voters into three groups, $V_{d,c}$, $V_d$,
    and $V'$, such that: $V_{d,c}$ contains exactly the voters that
    rank $c$ on the $(k+1)$-st position and that rank $d$ above $c$,
    $V_{d}$ contains the remaining voters that rank $d$ among top $k$
    positions, and $V'$ contains the other remaining voters.
  \item We guess two numbers, $a$ and $b$, such that $|V_{d,c}|\leq a$
    and $|V_d|\leq b$.
  \item We pick $a$ voters from $V_{d,c}$ for whom shifting $d$ to the
    $(k+1)$-st position is least expensive and we pick $b$ voters from
    $V_{d}$ for whom shifting $d$ to the $(k+1)$-st position is least
    expensive. We shift $d$ to the $(k+1)$-st position in the chosen
    votes.
  \item If $d$ is not the unique winner in the resulting election and the total cost of performed shifts is smaller than or equal to budget $B$,
    then we accept.  Otherwise, we either try a different candidate
    $c$ or different values of $a$ and $b$. After trying all possible
    combinations, we reject.
  \end{enumerate}

  The algorithm runs in polynomial time: It
  requires trying at most $O(m)$ different candidates and $O(n^2)$
  different values of $a$ and $b$. All the other parts of the
  algorithm require polynomial time (in fact, a careful implementation
  can achieve running time $O(mn^2)$).

  To show correctness of the algorithm we start with an observation that it is
  never beneficial to shift $d$ below position $k+1$. Further, an optimal
  solution, after which some candidate $c$ has at least as high a score as $d$,
  can consist solely of actions that shift $d$ backward from one of the top $k$
  positions to the $(k+1)$-st one, in effect either promoting $c$ to the $k$-th
  position (shifts in voter group $V_{d,c}$), or promoting some other candidate
  (shifts in voter group $V_{d}$).  We guess how many shifts of each type to
  perform and execute the least costly ones.
\end{proof}

\subsection{The Borda Rule and All Scoring Rules}

Next we consider the Borda rule. We also obtain a polynomial-time
algorithm, but this time we resort to dynamic programming. After
proving Theorem~\ref{thm:borda} below, we will show that our algorithm
generalizes to all scoring protocols (provided that either the scores are
encoded in unary or the price functions are encoded in unary).

\begin{theorem}\label{thm:borda}
  The \textsc{Destructive Shift Bribery} problem for the Borda rule is
  in $\p$.
\end{theorem}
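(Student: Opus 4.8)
The plan is to use dynamic programming, mirroring the structure of the $k$-Approval argument but accounting for the fact that under Borda every backward shift of $d$ simultaneously decreases $d$'s score by one and increases by one the score of exactly one other candidate (the one $d$ passes over). As in Theorem~\ref{thm:k-approval}, I would iterate over the choice of a ``rescuer'' candidate $c \in C \setminus \{d\}$ whom we aim to raise to a score at least as high as the (lowered) score of $d$; if for some $c$ we can achieve $\score_E(c) \geq \score_E(d)$ within budget $B$, we accept, and otherwise we reject. The key quantity to track is how many positions we shift $d$ back in total, say $s$: this immediately fixes $d$'s final score as $\score_E(d) - s$, so the remaining task is to distribute these $s$ shifts across the voters so as to maximize the score gain of the fixed rescuer $c$, at minimum cost, subject to not exceeding $B$.

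**The main idea** is that shifting $d$ back by $i$ positions in voter $v$ increases the scores of the $i$ candidates currently ranked immediately behind $d$ in $v$; the rescuer $c$ benefits from such a shift in $v$ if and only if $c$ is among those $i$ candidates, i.e. $\pos_v(c) > \pos_v(d)$ and we shift $d$ back far enough to pass $c$. So for each voter $v$ I would precompute, as a function of the number of positions $t_v \in \{0, 1, \ldots, m - \pos_v(d)\}$ by which $d$ is shifted back in $v$, the pair $(\text{gain to } c, \text{cost})$; the gain to $c$ is $1$ if $t_v$ is large enough to move $d$ past $c$ and $0$ otherwise, while the cost is $\rho_v(t_v)$. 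The dynamic program then processes voters one at a time, with a table entry $D[j][s]$ recording, over the first $j$ voters using a total of exactly $s$ backward shifts, the \emph{minimum cost} needed to achieve the \emph{maximum possible} gain to $c$ (or, to keep the state clean, I would let the table range over both total shifts $s$ and total gain $g$, storing the minimum cost of any assignment to the first $j$ voters using $s$ shifts and yielding gain $g$).

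**Concretely**, I would define $D[j][s][g]$ as the minimum total cost of shifting $d$ among the first $j$ voters by $s$ positions in aggregate so that $c$ gains exactly $g$ points, with transitions that consider every choice $t_j$ of how far to shift $d$ in voter $v_j$:
\begin{align*}
D[j][s][g] = \min_{0 \le t \le m - \pos_{v_j}(d)} \Big( D[j-1][s-t][g - \delta_j(t)] + \rho_j(t) \Big),
\end{align*}
where $\delta_j(t) \in \{0,1\}$ indicates whether shifting $d$ back by $t$ in $v_j$ moves $d$ past $c$. Both $s$ and $g$ are bounded by $O(mn)$ and $O(n)$ respectively, the number of voters is $n$, and each transition ranges over $O(m)$ values of $t$, so the whole table is filled in time polynomial in $m$ and $n$. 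After filling the table, for the fixed rescuer $c$ we accept if there exist $s$ and $g$ with $D[n][s][g] \le B$ and $\score_E(c) + g \ge \score_E(d) - s$; we then repeat over all $c$.

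**The step I expect to be the main obstacle** is arguing correctness cleanly, specifically the reduction to a single fixed rescuer and the monotonicity that makes the greedy-per-voter intuition valid. One must verify that it suffices to make \emph{some} single candidate catch up to $d$ (rather than needing a more global condition), that it is never helpful to shift $d$ below the level needed, and that pooling all voters' shift effects additively in the table correctly captures the score changes without double-counting — in particular that raising $c$ past $d$ in a vote and lowering $d$ are accounted for consistently. I would also need to confirm that the price-function monotonicity (condition~2 in the definition of destructive shift-bribery price functions) guarantees that the per-voter cost $\rho_j(t)$ is nondecreasing in $t$, so the dynamic program's local optimization genuinely yields a global optimum. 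Once these invariants are established, the polynomial running time and the generalization to arbitrary scoring protocols (replacing the Borda gain/loss increments by the appropriate $\alpha$-vector differences, which is where unary encoding of scores or prices becomes necessary to keep the $s$- and $g$-dimensions polynomially bounded) follow by essentially the same construction.
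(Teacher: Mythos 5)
Your proposal is correct and follows essentially the same route as the paper's proof: fix a ``rescuer'' candidate $c$, then run a per-voter dynamic program that tracks how the shifts change the scores of $d$ and $c$, accepting if some $c$ can be brought level with $d$ within budget. The only cosmetic difference is that you keep $d$'s score loss $s$ and $c$'s gain $g$ as two separate table indices, whereas the paper collapses them into the single quantity $k = s + g$ (the relative gain of $c$ over $d$), which is all the acceptance test $\score_E(c) + g \geq \score_E(d) - s$ actually depends on; both versions are polynomial, and your worries about price-function monotonicity are unnecessary since the DP enumerates all shift amounts explicitly.
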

\begin{proof}
  Let $E = (C,V)$ be the input election with \canddef{} and \votersdef{},
  let $d=c_1$ be the despised
  candidate, let $B$ be the budget, and let \orderedsetof{\rho}{n}
  be the destructive shift-bribery price functions for the voters.  As
  in the case of the proof of Theorem~\ref{thm:k-approval}, we give an
  algorithm that first guesses some candidate $c \in C \setminus
  \{d\}$ and then checks if it is possible to ensure---by shifting $d$
  backward without exceeding budget $B$---that $c$ has at least as
  high a score as $d$. The algorithm performing this test is based on
  dynamic programming. 

  We fix $c$ to be the candidate that we want to have a score at least
  as high as $d$. 
  Further, for each $j \in [n]$ and $k \in [m]$, we set $A(j,k)$ to be
  $1$ if $v_j$ ranks~$c$ among first~$k$ positions below~$d$, and we
  set it to be $0$ otherwise.
  Finally, we write $s$ to denote the difference between the scores of
  $d$ and $c$, that is, $s = \score_E(d) - \score_E(c)$ (it must be that
  $s > 0$; otherwise we could accept immediately since $d$ would not
  be the unique winner of the election).

  For each $j \in [n]$ and each positive integer $k$, we define
  $f(j,k)$ to be the smallest cost for shifting $d$ backward in the
  preference orders of the voters from the set \orderedsetof{v}{j},
  so that, if $E'$ is the resulting election, it holds that:
  \[
     s - (\score_{E'}(d) - \score_{E'}(c)) \geq k.
  \]
  In other words, $f(j,k)$ is the lowest cost of shifting $d$
  backward in the preference orders of voters from the set \orderedsetof{v}{j}
  so that the relative score of $c$ with respect to $d$
  increases by at least $k$ points. Our goal is to compute $f(n,s)$;
  if it is at most $B$ then it means that we can ensure that $c$ has
  at least as high a score as $d$ by spending at most $B$ units of the
  budget.
%
  We make the following observation.
  \begin{observation}
    \label{obs:ShiftLong}
    Shifting candidate $d$ backward by some $k$ positions decreases
    the score of $d$ by $k$ points and, if $d$ passes $c$, increases
    the score of $c$ by one point. In effect, relative to $d$, $c$
    gains, respectively, $k$ or $k+1$ points.
  \end{observation}
  Now, we can express function $f$ as follows. We have that $f(0,0) =
  0$ and for each positive $k$ we have $f(k,0) = \infty$ (for
  technical reasons, whenever $k < 0$, we take $f(j,k) = 0$). For each
  $j \in [n]$ and $k \in [s]$, we have:
  \[
    f(j,k) = \min_{k' \leq k} f(j-1,k-(k'+A(j,k')) + \rho_j(k').
  \]
  To explain the formula, we observe that:
  \begin{enumerate}
  \item The minimum is taken over the value $k'$; $k'$ gives the number of
    positions by which we shift $d$ backward in vote $v_j$.
  \item When we shift $d$ backward by $k'$ positions, relative to $d$,
    candidate $c$ gains $k'+A(j,k')$ points.
  \item The cost of this shift is $\rho_j(k')$.
  \end{enumerate}
  Based on these observations, we conclude that the formula is
  correct. It is clear that using this formula and standard dynamic
  programming techniques, we can compute $f(n,s)$ in polynomial time
  with respect to $n$ and $m$.  This means that we can test in
  polynomial time, for a given candidate $c$, if it is possible to
  ensure that $c$'s score is at least as high as that of~$d$.

  Our algorithm for \dsb{} for Borda
  simply considers each candidate $c \in C \setminus \{d\}$ and tests
  if, within budget $B$, it is possible to ensure that $c$ has at
  least as high a score as $d$. If this test succeeds for some $c$, we
  accept.  Otherwise we reject.
\end{proof}

A careful inspection of the above proof shows that there is not much
in it that is specific to the Borda rule (as opposed to other scoring
protocols). Indeed, there are only the following two dependencies:
\begin{enumerate}
\item The fact that the difference between the scores of candidate $d$
  and candidate $c$ (value $s$) can be bounded by a polynomial of the
  number of voters and the number of candidates (dynamic programming
  requires us to store a number of values of the function $f$ that is
  proportional to $s$, so it is important that this value is
  polynomially bounded).

\item The way we compute the increase of the score of $c$, relative to
  $d$, in the recursive formula for $f(j,k)$.
\end{enumerate}

Since it is easy to modify the formula for $f(j,k)$ to work for an
arbitrary scoring protocol~$\alpha$, we have the following corollary
(the assumption about unary encoding of the input scoring protocol
ensures that the first point in the above list of dependencies is not
violated).

\begin{corollary}
  There exists an algorithm that, given as input a scoring protocol
  $\alpha$ (for~$m$~candidates) encoded in unary and an instance $I$
  of the \textsc{Destructive Shift Bribery} problem (with the same
  number $m$ of candidates), tests in polynomial time if $I$ is a
  ``yes''-instance for the voting rule defined by the scoring protocol
  $\alpha$.
\end{corollary}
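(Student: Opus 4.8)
The plan is to adapt the dynamic-programming argument from the proof of Theorem~\ref{thm:borda} to an arbitrary scoring protocol $\alpha = (\alpha_1, \ldots, \alpha_m)$ encoded in unary. As the discussion preceding the corollary points out, the Borda proof relies on the Borda rule in exactly two places, so the goal is simply to verify that both dependencies survive the generalization. First I would observe that the overall structure of the algorithm is unchanged: we iterate over each candidate $c \in C \setminus \{d\}$ and test whether, within budget $B$, we can shift $d$ backward so that $c$ has a score at least as high as $d$; if the test succeeds for some $c$, we accept, and otherwise we reject.

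The two points that need attention are the bound on $s = \score_E(d) - \score_E(c)$ and the recursive formula for $f(j,k)$. For the first, I would note that under protocol $\alpha$ each of the $n$ voters contributes at most $\alpha_1$ to any candidate's score, so $\score_E(d) \leq n\alpha_1$ and hence $s \leq n\alpha_1$. Since $\alpha$ is encoded in unary, the value $\alpha_1$ is polynomially bounded by the input length, and therefore $s$ is polynomially bounded in the input size. This is precisely what makes the dynamic-programming table, which stores on the order of $s$ values of $f$, of polynomial size; this is the step I expect to be the crux, since it is the only place where the unary-encoding hypothesis is genuinely used, and without it $s$ could be exponentially large.

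For the second point, I would replace Observation~\ref{obs:ShiftLong} with its $\alpha$-analogue. When $d$ sits at position $p = \pos_{v_j}(d)$ and we shift it backward by $k'$ positions, $d$'s contribution from $v_j$ drops from $\alpha_p$ to $\alpha_{p+k'}$, while $c$'s contribution increases by $\alpha_{\pos_{v_j}(c)-1} - \alpha_{\pos_{v_j}(c)}$ precisely when $d$ passes $c$ (that is, when $c$ lies among the $k'$ positions just behind $d$), and is otherwise unchanged. Writing $g_j(k')$ for the resulting gain of $c$ relative to $d$ in vote $v_j$, the recurrence becomes
\[
  f(j,k) = \min_{k' \leq k} \bigl( f(j-1, k - g_j(k')) + \rho_j(k') \bigr),
\]
with the same boundary conditions $f(0,0)=0$, $f(k,0)=\infty$ for positive $k$, and $f(j,k)=0$ for $k<0$. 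The quantity $g_j(k')$ is computable in polynomial time from $\alpha$ and the vote $v_j$, so each table entry is evaluated in polynomial time, and computing $f(n,s)$ and comparing it to $B$ decides the per-candidate test. Correctness follows exactly as in Theorem~\ref{thm:borda}, since the recurrence still expresses the least-cost way to accumulate at least $k$ points of relative gain using only the first $j$ voters. I would close by remarking that the monotonicity of $\alpha$ guarantees, as before, that it is never beneficial to shift $d$ past position $m$ or to perform shifts beyond those captured by the recurrence.
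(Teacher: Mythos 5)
Your proposal follows the same route as the paper: rerun the dynamic program of Theorem~\ref{thm:borda} with the score increments recomputed for $\alpha$, and you correctly isolate the role of the unary encoding, namely that it keeps $s \leq n\alpha_1$ (and hence the table) polynomially bounded, which is indeed the crux. However, one concrete step fails: you retain the Borda-specific range $k' \leq k$ in the minimization. For Borda this restriction is harmless because, by Observation~\ref{obs:ShiftLong}, every unit shift gains at least one point of relative score, so a shift of more than $k$ positions is never needed to gain $k$ points. For a general nonincreasing $\alpha$ this is false: if $\alpha$ is constant over a stretch of positions (as for every $k$-Approval protocol), a long shift may be required to gain even a single point. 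Concretely, for $\alpha = (1,1,0)$ with $d$ in the first and $c$ in the third position of a vote, gaining one point of relative score requires $k' = 2$, yet your recurrence for $f(1,1)$ inspects only $k' \in \{0,1\}$ and would wrongly return $\infty$. The fix is immediate---let $k'$ range over all feasible shift amounts $0 \leq k' \leq m - \pos_{v_j}(d)$, which does not affect the polynomial running time---but as written the recurrence is incorrect, and your closing appeal to the monotonicity of $\alpha$ does not rescue it: monotonicity ensures that longer shifts never decrease the relative gain, not that shifts of at most $k$ positions suffice to gain $k$ points.
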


What can we do if, in fact, our scoring protocol is impractical to
encode in unary (e.g., if our scoring protocol is of the form
$(2^{m-1}, 2^{m-2}, \ldots, 1)$)? In this case, it is easy to show the 
following result.

\begin{corollary}
  There exists an algorithm that, given as input a scoring protocol
  $\alpha$ (for $m$ candidates) and an instance $I$ of the
  \textsc{Destructive Shift Bribery} problem (with the same number $m$
  of candidates), with price functions encoded in unary, tests in
  polynomial time if $I$ is a ``yes''-instance for the voting rule
  defined by the scoring protocol $\alpha$.
\end{corollary}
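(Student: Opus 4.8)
The plan is to reuse the overall structure of the proof of Theorem~\ref{thm:borda}: for each candidate $c \in C \setminus \{d\}$ I would test, via dynamic programming, whether $d$ can be shifted backward within budget $B$ so that $\score_E(c)$ becomes at least $\score_E(d)$, accepting if the test succeeds for at least one $c$. The only obstacle is that the dynamic program of Theorem~\ref{thm:borda} is indexed by the score difference $s = \score_E(d)-\score_E(c)$, which for a general scoring protocol (e.g.\ $(2^{m-1},\ldots,1)$) can be exponential in $m$, so the table would be too large. The key idea is to \emph{dualize} the dynamic program: instead of indexing by the score gain and minimizing cost, I would index by the cost and maximize the achievable score gain. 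This is where the unary encoding of the price functions becomes essential.

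First I would observe that, since the price functions are encoded in unary, every finite value $\rho_j(i)$ is bounded by the input size, and hence so is $T := \sum_{j=1}^n \rho_j\bigl(m-\pos_{v_j}(d)\bigr)$, the total cost of shifting $d$ to the last position in every vote. Since no useful bribery ever costs more than $T$, I may replace the budget by $B' := \min(B,T)$, which is polynomially bounded even if $B$ itself is given in binary. Next, for a fixed target $c$, I would describe a single vote's effect: shifting $d$ backward by $k'$ positions in $v_j$ (with $d$ starting at position $p=\pos_{v_j}(d)$) lowers the score of $d$ by $\alpha_p-\alpha_{p+k'}$, and if $d$ thereby passes $c$ (i.e.\ $c$ sits at some position $q$ with $p<q\le p+k'$) raises the score of $c$ by $\alpha_{q-1}-\alpha_q$. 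Writing $g_j(k')$ for the resulting gain of $c$ relative to $d$ (nonnegative, as $\alpha$ is nonincreasing), I would define $G(j,b)$ to be the largest relative gain of $c$ over $d$ achievable by shifting $d$ in the first $j$ votes while spending at most $b$ units, with $G(0,b)=0$ for all $b\ge 0$ and
\[
   G(j,b) \;=\; \max_{k'\,:\,\rho_j(k')\le b}\;\bigl(G(j-1,\,b-\rho_j(k'))+g_j(k')\bigr).
\]
The test for $c$ then reduces to checking whether $G(n,B')\ge s$.

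Correctness would follow exactly as in Theorem~\ref{thm:borda}: $d$ fails to be the unique winner after a bribery if and only if some candidate $c$ attains $\score_E(c)\ge\score_E(d)$, and the dual table computes, for each $c$, the maximum relative gain realizable at each budget level. The table has $n\cdot(B'+1)$ cells, each computed in $O(m)$ time by ranging over the shift amount $k'$, so with $B'$ polynomially bounded the whole procedure runs in polynomial time. I expect the only delicate point to be the budget-capping argument guaranteeing that $B'$ is polynomial; once the roles of cost and score are swapped, the remaining verification is routine and mirrors the Borda case.
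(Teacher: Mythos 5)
Your proposal is correct and follows essentially the same route as the paper, which likewise handles this case by ``dualizing'' the dynamic program of Theorem~\ref{thm:borda} so that the table is indexed by the budget spent and stores the maximum achievable relative score gain. Your explicit capping of the budget at the total (unary-bounded) cost of all possible shifts is a detail the paper leaves implicit, but it is the right way to make the table size polynomial.
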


To prove this result, we use the same argument as before, but now
function $f$ has slightly different arguments: $f(j,t)$ is the maximum
increase of the score of $c$, relative to $d$, that one can achieve by
spending at most $t$ units of budget (using such ``dual'' formulation
is standard for bribery problems and was applied, for example, by Faliszewski
et al.~\cite[Theorem~3.8]{fal-hem-hem:j:bribery} in the first paper
regarding the complexity of bribery problems).

On the other hand, if both the scoring protocol and the price functions are
encoded in binary, and the scoring protocol is part of the input, then the
problem is $\np$-complete by a reduction from the \textsc{Partition}
problem.

\begin{definition}
  In the \textsc{Partition} problem, we are given a sequence of
  positive integers and we ask if it can be partitioned into two
  subsequences whose elements sum up to the same value.
\end{definition}

\begin{theorem}
The \textsc{Destructive Shift Bribery} problem is $\np$-complete if both the
scoring protocol and the price functions are encoded in binary, and the scoring
protocol is part of the input.
\end{theorem}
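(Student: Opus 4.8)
The plan is to prove membership and hardness separately. Membership in $\np$ follows exactly as in Observation~\ref{obs:np-membership}: even when the scoring protocol $\alpha$ is part of the input and encoded in binary, winner determination is still polynomial (one only sums the $\alpha$-values at the candidates' positions), so the guess-the-shifts-and-verify argument applies. The substance is the hardness, which I would obtain by a polynomial-time reduction from \textsc{Partition}. Given positive integers $a_1,\dots,a_n$ with $\sum_i a_i = 2T$ (if the sum is odd we output a fixed no-instance), the goal is to build a \dsb{} instance that is a yes-instance if and only if some subset of the $a_i$ sums to exactly $T$.

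The construction I have in mind uses one ``gadget'' voter $v_i$ per number $a_i$, a despised candidate $d$, a single designated rival $c$, and some auxiliary (dummy) candidates. I would choose the binary-encoded scoring vector $\alpha$ so that its consecutive differences realize the values $a_i$, and place $d$ and $c$ in voter $v_i$ so that $c$ sits \emph{immediately behind} $d$ at a position whose gap to the next position equals $a_i$. Then shifting $d$ back by exactly one position in $v_i$ lowers $\score_E(d)$ by $a_i$, makes $d$ pass $c$, and raises $\score_E(c)$ by $a_i$ — i.e.\ it closes the $d$--$c$ gap by $2a_i$. Arranging $d$ at position $i$ and $c$ at position $i+1$ across the gadget voters makes the initial gap telescope to $\score_E(d)-\score_E(c)=\sum_i a_i = 2T$. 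I set the price function of $v_i$ so that a single backward shift costs $a_i$, while any shift by two or more positions has a (finite but) prohibitively large cost exceeding the budget; this makes each gadget voter an atomic include/exclude choice. Finally I set the budget $B=T$, and I add two padding voters that place, respectively, $d$ and $c$ first and carry an enormous top score $H$ (polynomial in the binary input), which boosts $d$ and $c$ equally — leaving the gap at $2T$ — while guaranteeing that $\score_E(d)$ and $\score_E(c)$ dominate every auxiliary candidate by a wide margin; prices in the padding voters are set so high that they are never touched.

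With this setup the equivalence is clean in both directions. If a subset $I$ with $\sum_{i\in I}a_i=T$ exists, shifting $d$ back one position in each $v_i$, $i\in I$, costs $\sum_{i\in I}a_i=T\le B$ and drives the gap to $2T-2T=0$, so $d$ is no longer the unique winner. Conversely, since the auxiliary candidates never change score and stay strictly below $d$'s post-bribery score, the only way to dethrone $d$ is for $c$ to catch up, which requires the set $J$ of shifted gadget voters to satisfy $\sum_{i\in J}a_i\ge T$; but the cost of those shifts is exactly $\sum_{i\in J}a_i\le B=T$, forcing $\sum_{i\in J}a_i=T$ and hence a valid \textsc{Partition} solution. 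The step I expect to be the main obstacle is precisely this squeezing argument together with ruling out side effects: one must verify that each shift is genuinely atomic (no profitable multi-position shift within the budget) and that no auxiliary candidate offers an alternative, cheaper route to making $d$ a non-winner, so that the \emph{same} quantity $\sum a_i$ is pinned to $T$ from above (by the budget) and from below (by the need to overtake $c$). This is where the prohibitive prices on multi-shifts and the $H$-padding that enforces $d,c$ dominance do the real work, and getting those magnitudes and positions consistent (while keeping all numbers polynomial in the binary encoding) is the fiddly part of the proof.
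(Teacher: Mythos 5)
Your proposal is correct and, like the paper, it is a polynomial-time reduction from \textsc{Partition} with NP-membership discharged by Observation~\ref{obs:np-membership}; the difference lies in the gadget. The paper embeds the input numbers as \emph{entries} of the scoring vector ($\alpha_1=\frac{s}{2}$, $\alpha_{i+1}=s_i$, then zeros), places $d$ at position $i+1$ of vote $v_i$ behind a static rival $p_i$ who already scores $\frac{s}{2}$ and never moves, and prices the shift of $d$ down to the first zero-scoring position at exactly $s_i$; the only mobile quantity is $d$'s score, which must drop by exactly $\frac{s}{2}$ on a budget of $\frac{s}{2}$. You instead encode the numbers as consecutive \emph{differences} of $\alpha$ and use a one-position swap of $d$ with a single mobile rival $c$, transferring $a_i$ from $d$ to $c$ and closing a gap of $2T$ on a budget of $T$. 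Both squeezes are arithmetically sound (cost pinned from above by the budget, score effect pinned from below by the need to catch $d$). What the paper's variant buys is that the ``no other candidate interferes'' step is immediate: every dummy occupies only low-scoring positions, so its score is below $\frac{s}{2}$ outright, and no candidate other than $d$ ever changes score. Your variant pays for the single rival with exactly the bookkeeping you flag as fiddly: the $H$-padding voters, distinct dummies per gadget vote (a dummy sitting above $d$ in a gadget vote collects up to $\alpha_1$, so you must argue $d$ stays ahead of it after losing at most $T$ points), and the prohibitive prices on multi-position shifts. All of this can be made consistent in polynomial size, so your route works; but if you want to minimize case analysis, fixing the rivals at top positions so that only $d$'s score is mobile is the cleaner design.
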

\begin{proof}
  Consider an instance of the \textsc{Partition} problem with sequence
  $S=(s_1, s_2, \ldots, s_n)$ and let $s = \sum_{i=1}^ns_i$ be the sum
  of the elements from $S$.  Without loss of generality, we assume
  that for each $i \in [n-1]$ we have $s_i \geq s_{i+1}$. We also
  assume that $s$ is even and that $s_1<\frac{s}{2}$ (for $s_1=\frac{s}{2}$ an
  instance is polynomial-time solvable, for $s_1>\frac{s}{2}$ we get a
  ``no''-instance of \textsc{Partition}). We form an instance of
  \textsc{Destructive Shift Bribery} as follows:
  \begin{enumerate}
  \item We introduce candidates $d$, $p_1, \ldots, p_n$, and dummy
    candidates $\{c_i^j \mid i,j \in [n] \}$.  
  \item We form a scoring vector $\alpha = (\alpha_1, \alpha_2,
    \ldots, \alpha_{n^2+n+1})$ such that $\alpha_1=\frac{s}{2}$, for
    each $i \in [n]$ we have $\alpha_{i+1}=s_i$, and for all $i \geq
    n+2$ we have $\alpha_i=0$. By our assumptions regarding the
    sequence $S$, we have that $\alpha$ is a legal scoring vector.
  \item Our election consists of $n$ votes, each representing an
    element of $S$. For each element~$s_i$ we construct vote $v_i$ by
    placing candidate $p_i$ on the first position and candidate $d$ on
    the $(i+1)$-st position. We fill the remaining $n$ places among
    the top $n+2$ positions in vote $v_i$ with candidates from the set
    $\{c_i^j \colon j \in [n]\}$ in an arbitrary way. Then we complete
    vote $v_i$ by placing the remaining candidates on positions $n+3,
    n+4, \ldots$ in an arbitrary order.
  \item We set the budget $B$ to be $\frac{s}{2}$. For every voter
    $v_i$, we define bribery function $\rho_i$ such that $\rho_i(0)=0$,
    for each $t \in [n-i+1]$ we have $\rho_i(t)=s_i$, and
    $\rho_i(t)=B+1$ for all other possible values of $t$.
  \end{enumerate}
  We illustrate this construction in the example below.

  \begin{example}
    \label{ex:sp-NP}
    Consider a \textsc{Partition} instance with sequence
    $S=(5,4,2,2,1)$. Our reduction forms an election with candidate
    set $C=\{d\} \cup \{p_1, \ldots, p_5\} \cup \{c_i^j \mid i,j \in
    [5]\}$ and votes $v_1, \ldots, v_5$. The scoring protocol $\alpha$
    is $(7,5,4,2,2,1,0, \ldots,0)$ and budget is $B=7$. The preference
    orders of the voters are:
	\begin{align*}
	v_1\colon \qquad & p_1 \succ d \succ c_1^1 \succ c_1^2 \succ c_1^3 \succ
	c_1^4 \succ c_1^5 \succ \dots \\
	v_2\colon \qquad & p_2 \succ c_2^1 \succ d \succ c_2^2 \succ c_2^3 \succ
	c_2^4 \succ c_2^5 \succ \dots \\
	v_3\colon \qquad & p_3 \succ c_3^1 \succ c_3^2 \succ d \succ c_3^3 \succ
	c_3^4 \succ c_3^5 \succ \dots \\
	v_4\colon \qquad & p_4 \succ c_4^1 \succ c_4^2 \succ c_4^3 \succ d \succ
	c_4^4 \succ c_4^5 \succ \dots \\
	v_5\colon \qquad & p_5 \succ c_5^1 \succ c_5^2 \succ c_5^3 \succ c_5^4 \succ
	d \succ c_5^5 \succ \dots
	\end{align*}
	For every voter $v_i$, its bribery function $\rho_i$ has the following values:
	$\rho_i(0)=0$, $\rho_i(t)=s_i$ for $t \in [6-i]$, and $\rho_i(t)=8$ for all
	other possible values of $t$.
  \end{example}

  The unique winner of the election constructed by our reduction is
  $d$, with score $s$.  For each candidate $p_i$, we have
  $\score(p_i)=\frac{s}{2}$, and for all $i,j \in [n]$ we have
  $\score(c_i^j)<\frac{s}{2}$. Scores of the $p_i$ candidates cannot
  change due to shifting $d$ backward within budget, because either
  they are ranked ahead of $d$ or they are ranked too far behind $d$
  (so it is impossible to shift $d$ behind them within budget).
  Similarly, no dummy candidate can achieve a score larger than
  $\frac{s}{2}$.  Thus to prevent $d$ from winning the election, one
  has to decrease his or her score at least by $\frac{s}{2}$ (so that
  the $p_i$ candidates have at least as high scores as $d$ has).

  We observe that by bribing some voter $v_i$, we are not able to
  decrease candidate $d$'s score by a value grater than the number of
  units of budget spent.  Moreover, the budget is $\frac{s}{2}$, which
  is also the number of points by which $d$'s initial score has to be
  reduced. Hence, if we choose to bribe some voter $v_i$, then we have
  to shift $d$ backward to the $(n+2)$-nd position, decreasing his or
  her score by $s_i$ points at the same time. Together, these
  observations mean that in a successful bribery we have to reduce
  $d$'s score by a sum of some elements $s_1,\ldots, s_n$. Further,
  this sum has to equal exactly $\frac{s}{2}$. Therefore, there exists
  a solution to the created \textsc{Destructive Shift Bribery}
  instance if and only if there exists a solution to the initial
  \textsc{Partition} instance.

  The reduction is computable in polynomial time and (by
  Observation~\ref{obs:np-membership}) the \dsb{} problem belongs to
  \np{}, which means that the problem is \np{}-complete.
\end{proof}

\subsection {The Bucklin Family of Rules}
We continue our analysis of the \textsc{Destructive Shift Bribery} problem by
considering the Bucklin and Simplified Bucklin rules. We find that for both
rules our problem is polynomial-time solvable (via a dynamic programming
approach). Before presenting our solution, it is helpful to define some
additional notation. For election $E=(C,V)$, we denote the $k$-Approval score of
some candidate $c \in C$ by $\score^k(c)$. For some fixed $k$, we frequently use
the term \emph{$k$-Approval point} referring to any single point a candidate gets during the
computation of his or her $k$-Approval score. Consequently, for some fixed $k$,
a candidate may lose or gain a $k$-Approval point as an effect of a shift action
of some voter. We use $\maj(V)=\left\lfloor \frac{|V|}{2}\right\rfloor+1$ to denote the
strict majority threshold for voters $V$. 
\begin{theorem}\label{thm:bucklin}
  The \textsc{Destructive Shift Bribery} problem for the Bucklin rule is in $\p$.
\end{theorem}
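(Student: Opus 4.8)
The plan is to reduce the problem, separately for each candidate Bucklin winning round, to a selection of shift actions that can be optimized by dynamic programming, in the same spirit as the proof of Theorem~\ref{thm:borda}. First I would dispose of the trivial case in which $d$ is already not the unique winner of $E$ (then we accept with budget $0$). Otherwise $d$ is the unique Bucklin winner, so $d$ has the strictly largest $\ell_0$-Approval score and this score is at least $\maj(V)$, where $\ell_0$ is the original winning round. Since every backward shift of $d$ can only decrease $\score^k(d)$ and can only increase $\score^k(x)$ for every other candidate $x$ and every $k$, I would guess the target winning round $\ell \in [m]$ of the bribed election $E'$ together with a candidate $c \neq d$ that will tie or beat $d$ in round $\ell$, and for each of the $O(m^2)$ guesses test whether the required bribery fits in budget $B$.

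For a fixed guess $(\ell,c)$ I would exploit the fact that $k$-Approval scores are nondecreasing in $k$: then ``$\ell$ is the winning round of $E'$ and $d$ is not its unique winner'' is equivalent to the conjunction of (C1) $\score^{\ell-1}(x)\le\maj(V)-1$ for every candidate $x$ (no majority is reached before round~$\ell$), and (C2) $\score^{\ell}(c)\ge\maj(V)$ together with $\score^{\ell}(c)\ge\score^{\ell}(d)$ (round $\ell$ is reached, and $c$ ties or beats $d$ there). Taking $c$ of this form is without loss of generality, because at the true winning round the maximum scorer is some candidate that meets the majority threshold and weakly beats $d$, and if that scorer is $d$ itself then, as $d$ is not the unique winner, another candidate shares the maximum. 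The key simplification is that only votes ranking $d$ among the top $\ell$ positions are relevant, and in each such vote it suffices to either leave $d$ in place or shift $d$ to the $(\ell+1)$-st position (shifting further back is never cheaper and affects only rounds past $\ell$). Such a shift in a vote $v$ makes $d$ lose one $\ell$-Approval point, makes the candidate previously on position $\ell+1$ gain one $\ell$-Approval point, and, if $d$ started in the top $\ell-1$ positions, makes the candidate previously on position $\ell$ gain one $(\ell-1)$-Approval point.

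With this reduction the task becomes: choose a minimum-cost set $S$ of relevant votes in which to shift $d$ so that $c$ gains enough $\ell$-Approval points to reach $\maj(V)$ and to overtake the lowered score $\score_E^{\ell}(d)-|S|$ of $d$ (this is~(C2)), while no candidate gains so many $(\ell-1)$-Approval points that it crosses the majority threshold (this is~(C1)). I would solve this with a dynamic program over the votes that tracks the number of $\ell$-Approval points accumulated by $c$ and the number of shifts performed so far, both bounded by $n$, playing the role of the relative-score table $f$ from Theorem~\ref{thm:borda}; I would enforce~(C1) as a per-candidate capacity, allowing each candidate $x$ to receive at most $\maj(V)-1-\score_E^{\ell-1}(x)$ of the $(\ell-1)$-Approval promotions and forbidding, while processing the votes, any shift that would exceed a candidate's remaining capacity. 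As all tracked quantities are polynomially bounded, the table has polynomial size and each guess is handled in polynomial time; accepting whenever some guess succeeds within the budget then yields the claimed polynomial-time algorithm.

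The step I expect to be the main obstacle is reconciling the two opposing effects of shifting $d$ backwards: the very shifts that lower $d$ and promote $c$ at level $\ell$ (needed for~(C2)) simultaneously promote other candidates at level $\ell-1$, and thus threaten to pull the winning round forward to a round where $d$ could again be the unique winner (violating~(C1)). One pleasant special case mitigates this: when the target round $\ell$ is strictly smaller than the original winning round $\ell_0$, candidate $d$ lies below the majority threshold in round $\ell$ both before and after the bribery, so overtaking $d$ is automatic and only~(C1) together with making $c$ reach the majority remains to be arranged. The careful bookkeeping of the per-candidate level-$(\ell-1)$ capacities inside the dynamic program is what makes the general argument go through, and I would establish correctness by matching any optimal bribery to the guess given by its actual winning round and its actual (co-)winner.
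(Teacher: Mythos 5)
Your overall strategy---guess the rival candidate and the winning round, then run a dynamic program over the votes---matches the paper's, but there is a genuine gap in the per-guess check. By insisting that the guessed round $\ell$ be \emph{exactly} the winning round of the bribed election, your condition (C1) forces you to control the $(\ell-1)$-Approval score of \emph{every} candidate, and the mechanism you propose for this does not work: your dynamic-programming state records only the vote index, the number of $\ell$-Approval points gained by $c$, and the number of shifts performed, so the ``remaining capacity'' of a candidate $x$ at level $\ell-1$ is not a function of the state---two different shift sets reaching the same state may have consumed the capacities of different candidates by different amounts. Hence ``forbidding, while processing the votes, any shift that would exceed a candidate's remaining capacity'' is not a well-defined transition rule, and the DP as described is incorrect. (The constraint is a partition-type capacity over the votes, grouped by which candidate occupies position $\ell$; it could be enforced by reordering the votes into these groups and adding a state coordinate counting promotions within the current group, but you would have to say this explicitly.)

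The paper sidesteps the difficulty entirely via an observation you almost make yourself in your ``pleasant special case'': it suffices to require that \emph{$d$} stays below the majority threshold before round $\ell$, i.e., $\score^{\ell-1}(d) < \maj(V)$, together with your condition (C2). If some other candidate then happens to reach a majority in a round $\ell' < \ell$, the actual winning round merely moves earlier, and since $\score^{\ell'}(d) \le \score^{\ell-1}(d) < \maj(V)$ while every winner at round $\ell'$ scores at least $\maj(V)$, candidate $d$ still loses. With this relaxation the only quantities the DP must track are $\score^{\ell}(c)$, $\score^{\ell}(d)$, and $\score^{\ell-1}(d)$, each updated from per-vote local data, and the flat dynamic program goes through. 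As written, your argument is missing either this relaxation or the extra bookkeeping needed to enforce (C1) for all candidates.
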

\begin{proof}
  Let us fix budget $B$ and election $E=(C,V)$, where
  $C=\lbrace c_1, c_2, \ldots, c_m \rbrace$ and
  $V=\{v_1, v_2, \ldots, v_n\}$. We also have shift-bribery prices
  functions $\{\rho_1, \rho_2, \ldots, \rho_n\}$. Without loss of generality, we
  assume that $c_1=d$ is the despised candidate.

  To prove our theorem, we give an algorithm which first guesses
  candidate $c_i \neq d$ and the Bucklin winning round $k$ and, then,
  checks whether $c_i$ is able to preclude $d$ from being the unique
  winner of the election while ensuring that $k$ is the Bucklin winning
  round (or the Bucklin winning round is even earlier). We start by
  defining an appropriate helper function, then we show that we can solve our
  problem using this function, and eventually we show that this function is
  polynomial-time computable.

  For each candidate $c_i \in C \setminus \{d\}$ and each $k \in [m]$,
  we define the following function $f_i^k$. For each
  $w \in \{0\} \cup [n]$ and each $p,q,q' \in [m]$, we let
  $f_i^k(w,p,q,q')$ be the lowest cost of bribing at most first $w$
  voters so that $\score^k(c_i)=p$, $\score^k(d)=q$, and
  $\score^{k-1}(d)=q'$. In other words, the function gives the cost of
  setting up the scores of candidate $c_i$ and $d$ in round $k$ (and
  just before round $k$, for $d$).

  To check whether there is a successful bribery (i.e., one that ensures
  that $d$ is not a unique winner of the election) of cost at most
  $B$, it suffices to check if there is candidate $c_i$, $c_i \neq d$,
  round number $k \in [m]$, and values $p, q, q'$ such that:
  \begin{enumerate}
  \item\label{c1} $q'<\maj(V)$,
  \item\label{c2} $p \geq \maj(V)$, 
  \item\label{c3} $p \geq q$, and
  \end{enumerate}
  $f_i^k(n,p,q,q') \leq B$.  Condition~(\ref{c1}) guarantees that $d$
  does not win in any round prior to $k$, Condition~(\ref{c2}) ensures
  that $k$ is the largest possible Bucklin winning round, and
  Condition~(\ref{c3}) imposes that $d$ is not a unique winner if $k$
  is the Bucklin winning round (note that the actual winning round
  number might be smaller than $k$ but then $d$ certainly does not
  win).
  If such a set of values exists, then we accept. Otherwise we reject.

  The above algorithm requires computing $O(n^3)$ values of at most
  $O(m^2)$ functions $f_i^k$. Thus, to show that the algorithm runs in
  polynomial time, it suffices to show that the values of the $f_i^k$
  functions are polynomial-time computable, which we now do.

  We compute values $f_i^k(n,p,q,q')$ using dynamic programming.  For
  each $p,q,q' \in [n]$, $f_i^k(0,p,q,q') = 0$ if prior to any bribery
  we have $\score^k(c_i) = p$, $\score^k(d) = q$, and
  $\score^{k-1}(d) = q'$. Otherwise, we have
  $f_i^k(0,p,q,q') = \infty$.  To compute $f_i^k(w,p,q,q')$ for
  $w > 0$, we express it using a recursive formula.  To present this
  formula compactly, for each voter $v_w$ and each $k$, $1 \leq k<m$,
  we define:
  \begin{enumerate}
  \item Function $L_w^k$ such that for each $j \in \{0\} \cup [m]$ we have
    $L_w^k(j) = 1$ if candidate $d$ is ranked below the $k$-th
    position after he or she is shifted backward by $j$ positions, and
    we have $L_w^k(j) = 0$ otherwise.

  \item Function $G_w^k$ such that for each $i,j \in [m]$, we have
    $G_w^k(i,j) = 1$ if after shifting candidate $d$ backward by $j$ positions,
    candidate $c_i$ is ranked among top $k$ positions, and
    we have $G_w^k(i,j) = 0$ otherwise.
  \end{enumerate}
  Using this notation, we express function $f_i^k$ as follows:
  \begin{align*}
    f_i^k(w,p,q,q')= & \min_{j \in \{0\} \cup [k]} \lbrace f_i^k(w-1, p-G_w^k(i,j),
                       q-L_w^k(i,j), r - L_w^{k-1}(j))+\rho_w(j) \rbrace
  \end{align*}
  Intuitively, for given $k$ and $i$, the above formula checks all
  possible values of $j$,\footnote{For function $f_i^k$ it is never
    necessary to shift $d$ to position lower than $k+1$, which is why
    we consider $j \in \{0\} \cup [k]$.} for which we shift $d$ backward by $j$
  positions in vote $v_w$, and uses shifts in the preceding
  votes to fulfill the function's definition.  This formula, together
  with standard dynamic programming techniques, allows us to compute
  the values of function $f_i^k$ in polynomial time. This completes
  the proof.
\end{proof}

By slightly changing the function presented in the above proof and
using a similar algorithm, we obtain an algorithm for the Simplified
Bucklin rule.
\begin{theorem}
  The \textsc{Destructive Shift Bribery} problem for the Simplified
  Bucklin rule is in $\p$.
\end{theorem}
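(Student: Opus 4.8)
The plan is to mirror the proof of Theorem~\ref{thm:bucklin}, adjusting the algorithm to reflect the only structural difference between the two rules: under Simplified Bucklin \emph{every} candidate whose $\ell$-Approval score reaches $\maj(V)$ in the winning round $\ell$ is a winner, whereas under Bucklin the winners are further restricted to those attaining the \emph{highest} such score. Consequently, to prevent $d$ from being the unique winner it no longer matters how $\score^k(c_i)$ compares with $\score^k(d)$; it suffices that some candidate $c_i \neq d$ reaches the majority threshold in the round we target.

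As before, I would guess a candidate $c_i \neq d$ and a round $k \in [m]$, intending $k$ to upper-bound the Simplified Bucklin winning round witnessed by $c_i$. I would reuse essentially the same dynamic program, but with a simpler helper function: since we no longer compare the scores of $c_i$ and $d$, I can drop the coordinate tracking $\score^k(d)$ and keep only $\score^k(c_i)$ and $\score^{k-1}(d)$. Concretely, I would define $g_i^k(w,p,q')$ as the least cost of bribing the first $w$ voters so that $\score^k(c_i)=p$ and $\score^{k-1}(d)=q'$, and compute it by the same recurrence used for $f_i^k$ (using the functions $G_w^k$ and $L_w^{k-1}$), simply omitting the $L_w^k$ term for $d$'s $k$-Approval score. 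The acceptance test then reads: there exist $c_i \neq d$, $k$, and values $p,q'$ with $q' < \maj(V)$, $p \geq \maj(V)$, and $g_i^k(n,p,q') \leq B$; the former condition (\ref{c3}) is dropped entirely.

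I expect the main work to be the correctness argument justifying that dropping the comparison $p \geq q$ is exactly right. For soundness I would argue by a short case analysis: the condition $p \geq \maj(V)$ forces the winning round to be at most $k$, and $q' < \maj(V)$ (together with monotonicity of $\ell$-Approval scores in $\ell$) forces $d$ not to reach majority before round $k$; hence if the winning round is strictly below $k$ then $d$ is not a winner at all, while if it equals $k$ then $c_i$ is a winner and $d$ is either a co-winner (if it too reaches majority) or a non-winner (if it does not)---in every case $d$ fails to be the unique winner. For completeness, given any successful bribery whose resulting election has winning round $\ell$, I would take $k = \ell$ and let $c_i$ be any majority-reaching candidate distinct from $d$; such a candidate exists because either $d$ is a non-winner and some other candidate wins, or $d$ shares the win with another candidate. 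Monotonicity then yields $\score^{\ell-1}(d) < \maj(V)$, so the acceptance conditions hold.

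Finally I would note that the number of guessed triples and the cost of the dynamic program are asymptotically the same as in Theorem~\ref{thm:bucklin} (indeed the helper function now carries one fewer coordinate), so the whole procedure runs in polynomial time, placing \dsb{} for Simplified Bucklin in \p{}.
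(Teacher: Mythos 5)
Your proposal is correct and takes essentially the same route as the paper: the paper's proof likewise adapts Theorem~\ref{thm:bucklin} by deleting the coordinate tracking $\score^k(d)$ from the helper function and dropping the associated acceptance condition $p \geq q$, keeping everything else unchanged. Your explicit soundness/completeness case analysis is a welcome elaboration of what the paper leaves implicit, but the underlying argument is the same.
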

\begin{proof}
  We adjust the proof of Theorem~\ref{thm:bucklin} to match the case
  of the Simplified Bucklin rule. For this rule, the exact score of a
  candidate in the Bucklin winning round is not taken into account
  while determining the winners (provided that it is greater than the
  majority of the voters). Therefore, we can simplify function
  $f_i^k(w,p,q,q')$ from the proof of Theorem~\ref{thm:bucklin} by
  removing parameter~$q$. Consequently, we remove the constraint
  associated with parameter~$q$ while checking for a successful
  bribery. The rest of the proof remains the same, so \dsb{} for the
  Simplified Bucklin rule is in \p{} as well.
\end{proof}

\subsection{The Copeland Rule}
Let us now move on to the case of Copeland$^\alpha$ family of rules.
We show that irrespective of the choice of $\alpha$,
$0 \leq \alpha \leq 1$, the \textsc{Destructive Shift Bribery} problem
is $\np$-complete for Copeland$^\alpha$. We also show $\w[1]$-hardness
of the problem for parametrizations by the budget, the number of
voters, and the number of bribed voters.  On the other hand, an FPT
algorithm for the parametrization by the number of candidates was
given by Knop et al.~\cite{KKM17} (formally, they did not study
\dsb{}, but it is immediate to see that their proof extends to this
setting).  To show our hardness results, we use reductions from one of
the classic $\np$-complete problems, \clique{} (which is also
$\w[1]$-hard for the parameterization by the clique size).

\begin{definition}
 In the \clique{} problem we are given a graph and an integer $k$. We ask
 whether it is possible to find a $k$-clique, that is, a size-$k$ set of pairwise
 adjacent vertices, in the given graph.
\end{definition}


We use the following notation in the proofs in this section. By putting some
set $S$ in a preference order, we mean listing the
contents of the set in an arbitrary but fixed order. To denote the reversed
order we use $\overleftarrow{S}$ (e.g., for set $S=\{a,b,c\}$, writing
$S$ in a preference order could mean $b \succ c \succ a$; then, putting
$\overleftarrow{S}$ in a preference order would mean $a \succ c \succ b$).

\begin{theorem}\label{thm:copeland}
  For each (rational) value of $\alpha$, $0 \leq \alpha \leq 1$, the
  \textsc{Destructive Shift Bribery} problem for the Copeland$^\alpha$ rule is
  $\np$-complete and W[1]-hard for the parameterization by the budget value and
  for the parameterization by the number of affected of voters.
\end{theorem}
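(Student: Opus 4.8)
The plan is to reduce from \clique. Given a graph $G=(U,F)$ and a target clique size $k$, I would build a \dsb{} instance in which the despised candidate $d$ is the unique Copeland$^\alpha$ winner and in which both the budget and the number of briberable voters are bounded by a function of $k$ alone. The central gadget encodes edges through head-to-head margins against $d$: I would introduce one \emph{edge candidate} $e_{ij}$ for every $\{i,j\}\in F$ and, using padding voters, arrange that $d$ beats each $e_{ij}$ by a margin of exactly $3$. I would also add a runner-up $p$ and dummy candidates, tuning the number of contests each wins so that $d$ is the unique winner and $p$ is second with Copeland score exactly $\binom{k}{2}$ below $d$. Crucially, all remaining head-to-head margins would be made large and odd, so that no contest is ever a tie; since ties are the only place where $\alpha$ enters a Copeland$^\alpha$ score, this makes the construction work uniformly for every rational $\alpha\in[0,1]$.

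For the selection mechanism I would add one \emph{selector voter} $v_i$ per vertex $i\in U$, arranged so that in $v_i$ the only candidates ranked below $d$ are the edge candidates $e_{ij}$ incident to $i$ (all other candidates, including $p$ and the dummies, sit above $d$). The key design choice is to use all-or-nothing price functions: each selector voter can have $d$ shifted arbitrarily far back at unit cost, while every padding voter is given a prohibitive price, and the budget is set to $B=k$. A solution may therefore shift $d$ down in at most $k$ selector votes and do nothing elsewhere. Shifting $d$ below its incident edge candidates in $v_i$ lowers $d$'s margin against each such $e_{ij}$ by $2$ (from $3$ to $1$), so $e_{ij}$ switches from a loss to a win against $d$ exactly when $d$ is shifted down in \emph{both} $v_i$ and $v_j$; a single endpoint leaves the margin at $1$ and changes nothing. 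Because the other margins are large and the protected candidates stay above $d$ in every selector vote, these shifts never alter any contest not involving an incident edge candidate, so the only scores that move are $d$'s (dropping by one per flipped edge) and the edge candidates' (which remain far below the winners).

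Correctness then reduces to a counting argument. Selecting a set $S$ of at most $k$ selector voters flips precisely the edges of $G$ with both endpoints in $S$; with $|S|\le k$ this is at most $\binom{k}{2}$ flips, with equality iff $|S|=k$ and $S$ induces a clique. Since $p$'s score is fixed at $\binom{k}{2}$ below $d$'s initial score, and $d$'s total drop equals the number of flipped edges, $d$ ceases to be the unique winner within budget iff some $k$ vertices of $G$ form a clique. The reduction is polynomial and $d$ is forced to be the initial unique winner; \np-membership is already given by Observation~\ref{obs:np-membership}, yielding \np-completeness. As both the budget $B=k$ and the number of affected voters (at most $k$) depend only on the clique size, the same reduction gives \wone-hardness for the parameterization by the budget and for the parameterization by the number of affected voters.

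The main obstacle I anticipate is decoupling the budget from the graph's degrees. Under unit (per-position) prices, shifting $d$ past the incident edge candidates in $v_i$ would cost $\deg(i)$, and the total budget could not be bounded by a function of $k$; moreover, a short shift would only pass a fixed prefix of candidates, so an arbitrary selected subset of neighbors could not be flipped without waste. Adopting all-or-nothing prices is what removes this dependence, but it forces care in two places: first, verifying that shifting $d$ all the way down in a selector vote leaves $p$, the dummies, and the mutual contests among edge candidates untouched (handled by keeping the protected candidates above $d$ and by the large, odd padding margins); and second, pinning down the exact base scores so that the ``$\binom{k}{2}$-below'' gap and the ``a flip needs both endpoints'' threshold line up precisely. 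These steps are routine but bookkeeping-heavy; the conceptual content lies entirely in the edge gadget and in the both-endpoints flipping condition.
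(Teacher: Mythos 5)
Your reduction is correct for the theorem as stated, but it takes a genuinely different route from the paper's, and the difference is worth spelling out. The paper also reduces from \clique{}, but with \emph{unit} price functions and budget $B=3\binom{k}{2}$: for each edge $e=\{u,v\}$ it creates a vote $d \succ u \succ v \succ e \succ \cdots$ (paired with its exact reverse), arranges the margin of $d$ over each edge candidate to be $1$ and over each vertex candidate to be $2k-3$, and places the runner-up $p$ exactly $\binom{k}{2}+k$ points behind $d$. Shifting $d$ back by three positions in $\binom{k}{2}$ chosen edge votes makes $d$ pass each chosen edge once and each of its endpoints once; only when the chosen edges form a $k$-clique does every passed vertex get passed the required $k-1$ times, and an explicit counting argument ($3\binom{k}{2}+x(k-2)>3\binom{k}{2}$ for $x\geq 1$, $k>2$) rules out any other within-budget solution. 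In other words, the paper encodes the ``both endpoints'' condition in the vertex candidates' margins, whereas you encode it in the edge candidates' margin of $3$ together with per-vertex selector votes --- a gadget closely related to the one the paper deploys later for the number-of-voters parameterization (there via \textsc{Multicolored Independent Set}). Your version is conceptually cleaner (budget exactly $k$, a crisp flip-iff-both-endpoints threshold, no vertex candidates to track), but it buys this by switching to all-or-nothing prices precisely to sever the budget from vertex degrees, as you yourself note. The cost of that move is that you do not obtain hardness under unit price functions, which the paper's construction delivers and which its preliminaries announce as the intended setting for all hardness results; your argument establishes the theorem only for the general (equivalently, the all-or-nothing) price model. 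Everything else --- the McGarvey-style padding to realize exact odd margins, the observation that odd margins make $\alpha$ irrelevant, \np{}-membership via Observation~\ref{obs:np-membership}, and the transfer of \wone{}-hardness to both the budget and the number of affected voters since both are bounded by functions of $k$ --- matches the paper's reasoning and is sound.
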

\begin{proof}
  Since winner determination for Copeland$^\alpha$ is in $\p$, by
  Observation~\ref{obs:np-membership} we see that our problem is in
  $\np$ and it remains to show its \np{}-hardness. We give a reduction
  from \clique{}.

  Let us fix some arbitrary value of $\alpha$ and an instance of
  \clique{} with a given graph $G=(V(G),E(G))$ and an integer $k$. We
  construct an instance of \dsb{} with an election $E = (C,V)$, unit
  bribery prices and budget $B=3{k \choose 2}$. Let the set of
  candidates $C$ be
  $\{d,p\} \cup L \cup L' \cup V(G) \cup E(G) \cup S$ (members of
  $V(G)$ and $E(G)$ are both vertices and edges in $G$ and
  corresponding candidates in our election).  Sets $L$, $L'$, and $S$
  contain dummy candidates, where $L$ and $L'$ consist of
  $|V| \cdot |E| \cdot B$ candidates each, and $S$ consists of
  ${k \choose 2} + k + 1$ candidates. We introduce the following
  voters:
  \begin{enumerate} 
  \item For each edge $\{u,v\}=e \in \bar{E}$, we introduce two voters with the following preference orders:
  \begin{align*}
  \text{1 vote} \colon& d \succ u \succ v \succ e \succ L \succ L' \succ p \succ
  E(G) \setminus\{e\} \succ V(G) \setminus\{u,v\} \succ S \\
  \text{1 vote} \colon &  \overleftarrow{S} \succ \overleftarrow{V(G) \setminus \{u,v\}} \succ \overleftarrow{E(G) \setminus \{e\}} \succ p \succ \overleftarrow{L'} \succ \overleftarrow{L} \succ e \succ v \succ u \succ d.
  \end{align*}
  \item We introduce $2k-3$ voters with the following preference orders: 
    \begin{align*}
      k-2 \text{ votes} \colon & \qquad E(G) \succ d \succ L \succ S \succ p \succ
      L' \succ V(G),\\
      k-2 \text{ votes} \colon & \qquad p \succ L' \succ d \succ L \succ V(G) \succ
      S \succ E(G), \\
      1 \text{ vote} \colon & \qquad S \succ p \succ d \succ L' \succ V(G) \succ L
      \succ E(G).
    \end{align*}
  \item We introduce $6k^2$ voters with the following preference orders: 
    \begin{align*} 3k^2 \text{ votes} \colon & \qquad d \succ L \succ L' \succ
      C \setminus (\{d\} \cup L \cup L'), \\
      3k^2 \text{ votes} \colon & \qquad \overleftarrow{C \setminus (\{d\}
      	\cup L \cup L')} \succ d \succ L' \succ L. 
    \end{align*} 
  \end{enumerate}
  Note that the number of voters is odd, so the value of $\alpha$ is irrelevant.

  We present the scores of the candidates, prior to bribery, in
  Table~\ref{tbl:scores} (for some candidates we only provide upper
  bounds on their scores; to verify the values in the table, it is
  helpful to note that the preference orders of the pairs of voters in
  the first group are reverses of each other, and thus one can
  disregard them when calculating scores).  We see that candidate $d$
  is the winner.

  \begin{table}[t]
    \centering
    \begin{tabular}{c|l}
      \toprule 
      candidate(s) & score \\
      \midrule
      $d$ & $|V| + |E| + |L| + |L'| + |S|$ \\ 
      $p$ & $|V| + |E| + |L| + |L'| + 1$ \\ 
      $S$ & $ \leq |V| + |E| + |L'| + |S|$ \\ 
      $L$ & $ \leq |V| + |E| + |L| + |S| - 1$ \\ 
      $L'$ & $ \leq |V| + |E| + |L| + |L'| - 1$ \\
      $V$ & $\leq |E| + |V| - 1$ \\
      $E$ & $\leq |E| - 1$ \\
      \bottomrule
    \end{tabular}
    \caption{Scores of all the candidates in the constructed
    election. We indicate upper bounds by $\leq$ where scores relate to groups
    of candidates.} \label{tbl:scores}
  \end{table}


  By shifting $d$ backward without exceeding the budget, we can only
  change the outcome of head-to-head contests between $d$ and the
  candidates from $V(G)$ and $E(G)$. All the other candidates are
  either ranked too far away from $d$ or, as in the case for $L$ and
  $L'$, $d$ has too large advantage over them. For each candidate
  $v \in V(G)$, we have $N_E(d,v)-N_E(v,d) = 2k-3$, and for each
  $e \in E(G)$, we have $N_E(d,e)-N_E(e,d) = 1$. Thus, for $d$ to lose
  a head-to-head contest against a candidate $v \in V(G)$, $d$ has to
  be shifted behind $v$ in at least $k-1$ votes, and to lose a
  head-to-head contest against a candidate $e \in E(G)$, $d$ has to be
  shifted behind $e$ in at least one vote (note that the scores of
  candidates in $V(G)$ and $E(G)$ are so low that after a shift
  bribery that does not exceed the budget, neither of them can be a
  winner).




  Candidate $p$ has the second highest score in our election and we
  have $\score_E(d)-\score_E(p)={k \choose 2}+k$. Hence, to prevent
  $d$ from being a winner, we need to lower $d$'s score by at least
  ${k \choose 2}+k$. If our graph contains a clique of size $k$ that
  consists of edges $Q = \{e_1, \ldots, e_{k \choose 2}\}$, then for
  each voter from the first group that corresponds to one of these
  edges we shift $d$ backward by three positions.  This costs
  $3{k \choose 2}$ units of budget. Candidate $d$ loses one point for
  each edge from the clique and one point for each vertex from the
  clique (because $d$ passes each of them exactly $k-1$
  times). Altogether, $d$ loses ${k \choose 2}+k$ points and ceases to
  be the unique winner.

  For the other direction, assume that there is a shift bribery of
  cost at most $B = 3{k \choose 2}$ that ensures that $d$ is not the
  unique winner. By the observations from the previous paragraphs, we
  can assume that the bribery affects voters in the first group only
  (and for each pair of voters there, it affects the first one).  We
  claim that the bribery has to shift $d$ behind $k \choose 2$ edge
  candidates in the first group of voters. To see why this is the
  case, assume that it shifts $d$ behind $y={k \choose 2} - x$
  (distinct) edge candidates, where $x$ is a positive integer smaller
  or equal to ${k \choose 2}$. To make $d$ lose ${k \choose 2} + k$
  points, we need to shift $d$ backwards behind at least
  ${k \choose 2} + k - y=k+x$ vertex candidates at least $k-1$
  times. That means that altogether the number of unit shifts that we need to make is at least:
  \[
  \underbrace{3\left({k \choose 2} - x\right)}_{\text{passing edge
      candidates}} + \underbrace{(k+x)(k-1)}_{\substack{\text{total number of
      unit shifts} \\ \text{needed to pass vertex candidates}}} - \underbrace{2\left({k \choose
      2}-x\right)}_{\substack{\text{unit shifts for passing vertex candidates,}\\ \text{accounted for while passing edge candidates}}}.
  \]
  This value is equal to:
  \begin{align*}
     \textstyle {k \choose 2}-x + (k+x)(k-1) & =  \textstyle {k \choose 2} -x + k(k-1) + x(k-1) \\
     & = \textstyle 3{k \choose 2} -x + x(k-1) = \textstyle 3{k \choose 2} + x(k-2),
  \end{align*}
  which is greater than $B = 3{k \choose 2}$ for $k > 2$ and $x > 1$.
  Since we can assume that $k > 2$ without loss of generality, we see
  that a successful shift bribery that does not exceed the budget has
  to guarantee that $d$ passes exactly $k \choose 2$ edge candidates.
  One can verify that this leads to $d$ passing $k$ vertex candidates
  $k-1$ times each only if these edges form a size-$k$ clique.

  \clique{} is \wone{}-hard for the parameter $k$. As our budget is a
  function of $k$ and the number of affected voters is exactly
  $k \choose 2$, our reduction shows \wone{}-hardness with respect to
  the budget and with respect to the number of affected voters.
\end{proof}

The problem remains hard also for the parametrization by the number of
voters.

\begin{theorem}
  Parameterized by the number of voters, the \dsb{} problem is \wone{}-hard,
  even for the case of unit prices.
\end{theorem}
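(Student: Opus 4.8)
The plan is to establish \wone{}-hardness for Copeland$^\alpha$ by a parameterized reduction from \textsc{Multicolored Clique}---the (well-known \wone{}-hard) problem of finding, in a graph whose vertex set is partitioned into $k$ color classes $V_1,\dots,V_k$, a clique using exactly one vertex per class. The whole point is that the produced \dsb{} instance must use a number of voters bounded by a function of $k$ alone, so the one-vote-per-edge gadget of Theorem~\ref{thm:copeland} is unavailable. The idea I would exploit is that, under unit prices, shifting $d$ back by $x$ positions costs exactly $x$ and makes $d$ pass precisely the (contiguous) block of the $x$ candidates sitting immediately behind $d$; hence a single vote can encode a number in unary, and I can get away with only $O(k)$ voters.

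For each color $a$ I would create two \emph{selection voters}: $\sigma_a^+$, which lists the vertices of $V_a$ right behind $d$ in the order $v_{a,1}\succ v_{a,2}\succ\cdots$ (each vertex's slot padded to a common width by dummy candidates that $d$ beats unassailably, so the cost is a clean multiple of the number of vertices crossed), and $\sigma_a^-$, which lists them in the reversed order. Let $x_a^+,x_a^-$ be the numbers of vertices $d$ crosses in these two votes. Placing each edge candidate $\widehat{e}$ for $e=\{u,w\}$ (with $u\in V_a$, $w\in V_b$) immediately behind $u$ in $\sigma_a^+,\sigma_a^-$ and behind $w$ in $\sigma_b^+,\sigma_b^-$, one checks that $\widehat{e}$ is passed in \emph{both} votes of color $a$ exactly when $x_a^+$ equals the index of $u$: the ``$+$'' threshold requires that index to be at most $x_a^+$, while the reversed ``$-$'' threshold requires it to be at least $x_a^+$ once the budget forces $x_a^+ + x_a^- = |V_a|+1$. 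Thus, if the budget pins $x_a^+ + x_a^- = |V_a|+1$ for every $a$, exactly one vertex per class is ``doubly passed,'' and $d$ passes $\widehat{e}$ in all four of its votes iff both endpoints of $e$ are the chosen vertices. I would set the initial head-to-head margin of $d$ over every edge candidate to $7$ (so that exactly four passes overturn it) and over every vertex candidate to $5$ (so that the at-most-two passes a vertex ever receives can never overturn it); the remaining margins are fixed by a small number of auxiliary voters together with the reversed-order trick $\overleftarrow{(\cdot)}$, with the total number of voters kept odd so that the value of $\alpha$ is irrelevant. Finally I make $d$ the unique Copeland$^\alpha$ winner with a runner-up $p$ (never passed by $d$) satisfying $\score_E(d)-\score_E(p)={k \choose 2}$, and set the budget $B$ to $\sum_{a}(|V_a|+1)$ scaled by the common slot width.

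Correctness then splits into two directions. A $k$-clique yields, for each color, the shift selecting its clique vertex; this costs exactly $B$, doubly passes one vertex per class, and overturns $d$'s contest against each of the ${k \choose 2}$ clique edges, dropping $d$'s score to that of $p$ and destroying the unique win. Conversely, to lose ${k \choose 2}$ contests $d$ must overturn ${k \choose 2}$ edge candidates, which requires at least one doubly passed vertex in every class, i.e.\ $x_a^+ + x_a^- \ge |V_a|+1$ for all $a$; since $\sum_a(x_a^+ + x_a^-)$ is exactly the cost and is bounded by $B$, every inequality must be tight, so exactly one vertex per class is chosen and the ${k \choose 2}$ overturned edges can only be the ${k \choose 2}$ pairs among $k$ vertices---forcing a clique. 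The number of voters is $O(k)$, which is the new content (the budget and the number of affected voters are again functions of $k$). I expect the main obstacle to be precisely the interplay between the contiguity of shifts and the tight-budget argument: the two-sided ($\sigma_a^+/\sigma_a^-$) threshold mechanism that turns an inherently downward-closed block of passed candidates into an \emph{exact} single-vertex selection is the crux, and making the global count $\sum_a(x_a^+ + x_a^-)=B$ force simultaneous tightness in every class---while verifying, through careful but routine margin and padding bookkeeping, that no contest other than the intended edge contests can be flipped within budget---is the step that will need the most care.
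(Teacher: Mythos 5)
Your route is genuinely different from the paper's. The paper reduces from \textsc{Multicolored Independent Set}: it gives $d$ a head-to-head margin of exactly $1$ over every vertex and edge candidate, so that $d$ must pass \emph{each of them at least once}, and sets the budget to the exact minimum cost of covering all vertex candidates of each color using the two prefix-votes $o_i,o'_i$; that minimum forces exactly one edge block per color to be skipped, and soundness comes from the skipped blocks having to form an independent set. You instead reduce from \textsc{Multicolored Clique} and use an ``exactly four passes'' threshold (margin $7$) so that an edge contest flips only when both endpoints are doubly passed. Both designs give $O(k)$ voters, and your two-sided $\sigma_a^+/\sigma_a^-$ selection of a single index per color is a nice idea. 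However, as written the reduction is not sound, and the failure is precisely in the tight-budget step you flagged but did not resolve.

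The problem is that under unit prices the cost of a shift is the number of \emph{positions} crossed, not the number of vertex slots crossed, and nothing forces a shift to be a multiple of the slot width $w=\Delta+1$; your identity ``$\sum_a(x_a^+ + x_a^-)$ is exactly the cost'' is only valid in slot units. Concretely, to doubly pass the clique edges lying in the block of the selected vertex $v_{a,j}$, the combined shift in $\sigma_a^+$ and $\sigma_a^-$ must reach those particular edge candidates from both ends, and this costs anywhere between roughly $(|V_a|-1)w+4$ and $(|V_a|+1)w$ depending on where those edges happen to sit inside the block (and on whether the block is internally reversed in $\sigma_a^-$). Since the clique is not known in advance, the budget must cover the worst case, about $(|V_a|+1)w$ per color. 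But reaching edges in \emph{two distinct} blocks of color $a$ from both sides costs only about $|V_a|\cdot w+O(1)$, which fits inside that per-color share; the leftover $\approx w$ per color is exactly enough to ``double-select.'' With two selected vertices in some class there are more than ${k \choose 2}$ available cross-color pairs, so ${k \choose 2}$ edge contests can be flipped without the selected vertices forming a $k$-clique, and the reverse direction collapses. The paper avoids this trap by making the binding cost constraint depend on the fixed positions of the \emph{vertex} candidates (which must all be covered once) rather than on the variable positions of edge candidates inside their blocks, so its budget is genuinely the minimum of the honest solution and leaves no slack. To salvage your construction you would need an analogous normalization--for instance, forcing the entire block of the selected vertex to be traversed from both sides so that the honest cost is independent of which vertex is selected and of where its edges lie--together with a lower-bound argument in position units rather than slot units.
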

\begin{proof}
  \newcommand{\graphcands}{\ensuremath{G_{\text{c}}}}
  \newcommand{\vote}{\ensuremath{o}} We give a parameterized reduction
  from the \textsc{Multicolored Independent Set} problem, where we are
  given a graph $G=(V(G),E(G))$ with each vertex colored with one out
  of $h$ colors, and we ask if there is a size-$h$ subset of vertices
  $I \subseteq V(G)$ such that every vertex has a different color and
  for each two $u,v \in I$, there is no edge $\{u,v\}$ in the
  graph. We assume without loss of generality that the number of
  vertices of every color is the same, there are no edges between
  vertices of the same color, and there exists at least one vertex
  with non-zero degree among vertices of each color. Let $q$ be the
  number of vertices of each color.  We denote the maximum degree
  among vertices of graph $G$ by $\Delta$.  Let $\delta(v)$ be the
  degree of vertex $v \in V(G)$, and let $E(v)$ be the set of edges
  adjacent to a given vertex $v \in V(G)$. By $V(i)$ we mean the set
  of all vertices of color~$i$.

  For each color $i$, we introduce vertex candidates
  $V(i) = \{v^{i}_1, v^{i}_2, \dots, v^{i}_{q}\}$. We represent every
  edge in the input graph $G$ with one edge candidate. For each vertex
  candidate $v^{i}_j$, by $E_j^i$ we denote the set of edge candidates
  representing the edges incident to $v^i_j$ in graph $G$.  We write
  \graphcands{} to denote the set of all vertex and edge candidates.
  If for some $i \in [h]$ and $j \in [q]$, $|E^i_j|<\Delta$, we add
  the set of filler candidates $F^{i}_{j}$ of size
  $\Delta-\delta(v^{i}_j)$.  We write $F$ to denote the union of all
  the sets of filler candidates for all the vertices.  We introduce
  $h+3$ sets of dummy candidates with $t = hq(\Delta+1)$ elements each
  (note that $t > |V(G)|+|E(G)|$); namely, these sets are $D_1, \ldots, D_h$, $D'$, $D''$, and $D'''$.  We write $\mathcal{D}$ to denote
  $\bigcup_{i=1}^h D_i$ and $\mathcal{D}_{-i}$ to denote
  $\mathcal{D}\setminus D_i$.  Lastly, we add candidates $d$, $p$ and
  $q$. To specify voters in a compact form, for every color
  $i \in [h]$ we introduce partial preference order:
  \[
     P_i=v^i_1 \succ E^i_1 \succ F^i_1 \succ v^i_2 \succ \dots \succ  v^i_3 \succ \dots \succ v^i_q \succ E^i_q \succ F^i_q.  
  \]
  For a given $i \in [h]$, we write $P_{-i}$ to denote an arbitrary
  but fixed order over all the vertex, edge and filler candidates not
  ranked in $P_i$.
  The constructed election
  consists of the following votes:
  \begin{enumerate}
    \item For each color $i$ we construct votes:
    \begin{align*}
     \vote_i \colon & d \succ P_i \succ D_i \succ \mathcal{D}_{-i} \succ P_{-i} \succ p \succ q
     \succ D' \succ D'' \succ D''',
      \\
      \vote'_i \colon & d \succ \overleftarrow{P_i} \succ \overleftarrow{D_i} \succ
     \mathcal{D}_{-i} \succ P_{-i} \succ p \succ q \succ D' \succ D'' \succ D'''.
    \end{align*}
    We add votes $\bar{\vote}_i$ and $\bar{\vote}'_i$ with the reversed preference order
    of, respectively, $\vote_i$ and $\vote'_i$.
    \item We introduce seven votes as follows:  
    \begin{align*}
    u_1 \colon& p \succ F \succ \mathcal{D} \succ q \succ D' \succ \overleftarrow{D''} \succ \graphcands \succ D''' \succ d,\\
    u_2 \colon&d \succ \overleftarrow{D''} \succ  \overleftarrow{\graphcands} \succ D''' \succ  \overleftarrow{D'}\succ \mathcal{D} \succ q \succ  p \succ F,\\
    u_3 \colon& p  \succ q \succ d \succ \overleftarrow{D'} \succ D''' \succ \mathcal{D} \succ F \succ D'' \succ \graphcands,\\
    u_4 \colon & \graphcands \succ d \succ D'' \succ F \succ  \mathcal{D} \succ p \succ q \succ D''' \succ D', \\
    u_5 \colon & d \succ D''' \succ F \succ \graphcands \succ D'' \succ  \mathcal{D} \succ p \succ q \succ D', \\
    u_6 \colon & q \succ p \succ \mathcal{D} \succ D'' \succ d \succ \overleftarrow{D'''} \succ  D' \succ F \succ \graphcands,\\
    u_7 \colon & q \succ  D' \succ \graphcands \succ d \succ p \succ F \succ  \mathcal{D} \succ D'' \succ D'''.
    \end{align*}
  \end{enumerate}
  Our election admits unit prices, and the budget is $B=h(q + (q-1)\Delta)$.

  \begin{table}
    \centering
    \begin{tabular}{c|l}
      \toprule 
      candidate & score \\
      \midrule
      $d$ & $(h+3)t + |E| + |V| + |F| + 1$ \\ 
      $p$ & $(h+3)t + |F| + 1$ \\ 
      $q$ & $3t + |F| + |E| + |V| + 1$ \\ 
      $D_i, i \in [h] $ & $ \leq (h+3)t$ \\ 
      $D'$ & $ \leq t + |F| -1$ \\
      $D''$ & $ \leq 3t + |E| + |V| -1 $ \\
      $D'''$ & $ \leq 2t + |F| - 1$ \\
      $V$, $E$ & $\leq (h+2)t + |E| + |V|$ \\
      $F$ & $\leq (h+1)t + |E| + |V| + |F| -1$ \\
      \bottomrule
    \end{tabular}
    \caption{The scores of all of the candidates in the constructed
    election. We indicate upper bounds by $\leq$ where scores relate to groups
    of candidates.}
    \label{tbl:scores2}
  \end{table}

  We show the scores of the candidates (prior to bribery) in
  Table~\ref{tbl:scores2}.  For example, candidate $d$ wins
  head-to-head contests with all the candidates in $D'$, $D''$, $D''$,
  and $\mathcal{D}$, which accounts for the first $(h+3)t$ points. Candidate
  $d$ also wins with all the edge, vertex, and filler candidates,
  which gives additional $|E|+|V|+|F|$ points. Finally, $d$ wins
  with~$p$ but not with~$q$.

  \begin{table}[t]
    \centering
    \begin{tabular}{c|ccccccc}
      \toprule
      & $F$ & $V$ & $E$ & $\mathcal{D}$ & $D'$ & $D''$ & $D'''$ \\ 
      \midrule
      $d$ & $5$ & $1$ & $1$ & $3$ & $5$ & $3$ & $7$\\
      \bottomrule
    \end{tabular}
    \caption{For each set $X \in \{F,V.E.\mathcal{D}, D', D'', D'''\}$ and each $c \in X$,
      we report the difference between the number of voters that prefer $d$ to $c$ and those
      that prefer $c$ to $d$.}
    \label{tbl:vwone-wins}
  \end{table}

  Note that the budget value is smaller than the number of dummy
  candidates in each of the sets $D_1, \ldots, D_h, D', D''$, and
  $D'''$ (i.e., $B < t$). Moreover, in each vote in which $d$ can be
  shifted back, $d$ is either directly preferred to some set of dummy
  candidates or there are candidates from sets $F$, $E$ and $V$
  between $d$ and some set of dummy candidates. These two observations
  imply that, with a shift bribery of cost at most $B$, candidate $d$
  can be shifted back only behind candidates from sets $F$, $V$, $E$,
  $D'$, $D''$, $D'''$, and $\bigcup_{i \in [q]}D_i$.
  In~Table~\ref{tbl:vwone-wins} we present differences between the
  numbers of voters preferring $d$ to the candidates from these sets
  and the voters with the opposite preferences. Let us fix some candidate
  $c \in D' \cup D'' \cup D''' \cup \bigcup_{i \in [q]}D_i$. We note that there
  is no bribery of cost at most $B$ which makes $d$ lose a
  head-to-head contest with $c$.  If it were possible, then at least two voters
  would have to shift $d$ behind $c$. However, with a bribery of cost at most
  $B$, $d$ can pass candidate $c$ at most once.  This is a
  consequence of the fact that every time we shift $d$ behind $c$ in
  some vote, there exists only one other vote $v$ in which we can do
  this once more within cost $B$; however, the cost of such a shift in
  $v$ is always greater that the remaining budget. Let us now consider
  some candidate $c \in F$. For $d$ to lose the head-to-head contest
  against $c$, $d$ has to be shifted behind $c$ in at least three
  votes. However, there exist only two votes in which the price of a
  backward shift of $d$ behind $c$ is within the budget (e.g., for
  $c \in F^i_j$, $i \in [h]$, $j \in [q]$, the votes are $\vote_i$ and
  $\vote'_i$).
  
  The final conclusion from our observations is that to prevent $d$
  from being a unique winner of the election, with a shift bribery of
  cost at most $B$, one has to shift candidate $d$ behind every edge
  and vertex candidate at least once. Then candidate $d$ would no
  longer be the unique winner because $d$ and $p$ would tie (note that
  as $t>|E|+|V|$, no other candidate could threaten $d$). We claim
  that if one is able to shift $d$ back behind every edge and vertex
  candidate at least once without exceeding the budget, then the input
  graph $G$ has a multicolored independent set of size $h$. To show
  this, let us focus on some color $i$. Due to the budget constraint
  and votes' construction, we have to shift $d$ back behind every
  vertex $v \in V(i)$ bribing only voters $o_i$ and $o'_i$. To achieve
  this, we have to spend at least $q + (q-1)\Delta$ units of
  budget. We observe that it is also the upper bound because our
  argument holds for every color and we are constrained by the budget
  $B=h(q+ (q-1)\Delta)$. However, spending $q +(q-1)\Delta$ units of
  budget for some color $i$ does not allow us to shift $d$ back behind
  all the candidates in $\bigcup_{j \in [q]} E^i_j$. We have to shift
  $d$ back behind all vertex candidates of color $i$ and the only
  possibility to achieve this is to leave all candidate edges of one
  vertex candidate not passed by $d$. Let us call this vertex a
  selected vertex. Now, we can see that the only case where all edge
  candidates are, nonetheless, passed by candidate~$d$ is if the
  selected vertex candidates form a multicolored independent set. If
  this is not the case, than there exists at least one edge $e$
  connecting two selected vertices. Since edge candidates of selected
  vertices are not passed by $d$, the edge candidate corresponding to
  $e$ is never passed by $d$ and, so, $d$ is still the unique
  winner. On the other hand, one can also verify that if there exists
  a multicolored independent set, then one can always find a
  successful bribery.
  
  Since our reduction is a valid parameterized (indeed, a
  polynomial-time computable one) reduction where the number of voters
  is a function of the parameter $h$, and \textsc{Multicolored
    Independent Set} is \wone{}-hard for this parameter, we conclude
  that our problem is \wone{}-hard for the parameterization by the
  number of the voters.
\end{proof}

For the sake of completeness, we finish our analysis by mentioning one
more parameterized computational complexity result regarding \dsb{}
for Copeland$^\alpha$.

\begin{observation}
  Parameterized by the combined parameter number of voters and budget
  value, the \dsb{} problem is in \fpt{}. \dsb{} is in \fpt{} with
  respect to the parameter number of voters for all-or-nothing bribery
  functions (where, for each voter, there is a single price for every
  possible shift).
\end{observation}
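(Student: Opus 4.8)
The plan is to handle both parts by bounded brute force, relying on a simple monotonicity property of destructive shifts under Copeland$^\alpha$. First I would record the following observation, which underlies everything: shifting $d$ one position further back in a single vote swaps $d$ with the candidate $c$ immediately behind it, which only increases $N_E(c,d)$ and decreases $N_E(d,c)$, and affects no head-to-head contest not involving $d$. Consequently, shifting $d$ further back can only (weakly) lower $d$'s Copeland$^\alpha$ score and (weakly) raise the score of every other candidate. Hence the property ``$d$ is not the unique winner'' is upward closed under additional backward shifts: if some bribery already makes $d$ not the unique winner, then shifting $d$ even further back (in the same votes) preserves this.

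For the combined parameter (number of voters $n$, budget $B$), I would enumerate budget allocations rather than shift vectors, since the latter can shift $d$ by up to $m-1$ positions. For each voter $v_i$ and each $b \in \{0,1,\ldots,B\}$, let $\sigma_i(b)$ denote the largest number of positions by which $d$ can be shifted back in $v_i$ at cost at most $b$; this is well defined and polynomial-time computable because $\rho_i$ is nondecreasing. By the monotonicity observation, once we commit to spending $b_i$ units on voter $v_i$, shifting $d$ back by exactly $\sigma_i(b_i)$ positions is without loss of generality optimal for the destructive goal. It therefore suffices to iterate over all allocations $(b_1,\ldots,b_n)$ with $b_i \in \{0,\ldots,B\}$ and $\sum_i b_i \le B$; there are at most $\binom{n+B}{n} \le 2^{n+B}$ of them, a function of the parameters alone. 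For each allocation I would apply the shifts $\sigma_i(b_i)$, recompute the Copeland$^\alpha$ winners in polynomial time, and accept iff $d$ is not the unique winner. Any genuine solution with shifts $k_i$ corresponds to the allocation $b_i = \rho_i(k_i)$ (for which $\sigma_i(b_i) \ge k_i$), so completeness follows from the upward-closure property.

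For the parameter number of voters alone with all-or-nothing price functions, the cost of bribing a voter no longer depends on how far $d$ is shifted, so the analysis simplifies further. Once we pay the price $c_i$ for voter $v_i$, every shift amount is free, and by monotonicity shifting $d$ all the way to the last position in each bribed vote is optimal. Hence a solution is fully described by the set $S$ of bribed voters: the cost is $\sum_{i \in S} c_i$ and the effect is to move $d$ to the bottom of each vote in $S$. I would enumerate all $2^n$ subsets $S$, and for each verify in polynomial time whether $\sum_{i \in S} c_i \le B$ and whether bottoming out $d$ in the votes of $S$ makes $d$ not the unique winner; this yields an $\fpt$ algorithm in $n$.

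The only point requiring care --- and the single step I would treat most carefully --- is the upward-closure claim, namely that restricting attention to maximal shifts (within an allocated budget, or all the way to the bottom) loses no solutions. This follows from the fact that a single backward shift of $d$ touches only contests involving $d$ and weakly weakens $d$ in each; I would verify that ``$d$ not the unique winner'' is preserved by comparing scores candidate-by-candidate between the intermediate and the maximal configurations. No special handling of ties or of the value of $\alpha$ is needed, because each enumerated configuration is checked by directly computing the Copeland$^\alpha$ winner set.
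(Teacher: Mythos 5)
Your proof is correct and takes essentially the same approach as the paper's: enumerate the per-voter budget allocations (bounded by a function of $n$ and $B$ alone) and, for all-or-nothing prices, the subsets of bribed voters, shifting $d$ maximally back in each case. The paper's own proof is a two-line sketch that leaves implicit the justification for maximal shifting; your explicit upward-closure (monotonicity) argument supplies exactly that missing detail and is valid for every $\alpha\in[0,1]$.
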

\begin{proof}
  For unconstrained bribery functions one can guess which voters to
  bribe and how many budget units to use for each bribed voter.  The
  result holds for all-or-nothing prices functions because it suffices
  to guess the voters where we shift $d$ to the back.
\end{proof}

\subsection{The Maximin Rule}

In spite of the hardness results from the previous section, it
certainly is not the case that \textsc{Destructive Shift Bribery} is
hard for all Condorcet-consistent rules. We now show a polynomial time
algorithm for the case of Maximin.

\begin{theorem}
  \label{thm:maximin}
  The \textsc{Destructive Shift Bribery} problem for the Maximin rule
  is in $\p$.
\end{theorem}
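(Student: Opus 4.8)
The plan is to reuse the ``guess the rescuer'' strategy from the Borda proof, but to exploit the special structure of how backward shifts interact with Maximin scores. First I would record the basic monotonicity fact: shifting $d$ backward past a candidate $c'$ in a single vote turns one voter who preferred $d$ to $c'$ into one who prefers $c'$ to $d$, so it decreases $N_E(d,c')$ by one and increases $N_E(c',d)$ by one, while leaving every head-to-head contest not involving $d$ untouched. Consequently, every backward shift can only \emph{decrease} the Maximin score of $d$ and can only \emph{increase} the Maximin score of each other candidate, so no shift is ever counterproductive (apart from its cost). More precisely, if $x_c$ denotes the number of votes in which $d$ is eventually shifted behind a fixed candidate $c\neq d$, then after the bribery the Maximin score of $c$ equals $\min(M_c, N_E(c,d)+x_c)$, where $M_c=\min_{c'\in C\setminus\{c,d\}}N_E(c,c')$ is a quantity that does not depend on the bribery at all.

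The next step is to turn the winning condition into a pair of coverage requirements. Since $d$ fails to be the unique winner exactly when some candidate $c\neq d$ ends with Maximin score at least that of $d$, I would guess three things: the rescuing candidate $c\neq d$, the candidate $e\neq d$ that will witness $d$'s final (minimum) margin, and an integer threshold $\tau\in\{0,\dots,n\}$ intended to separate the two final scores. Guessing $e$ is what converts the troublesome ``there exists an opponent against whom $d$'s margin is small'' (a disjunction) into a single constraint on $N_{E'}(d,e)$. For a fixed triple $(c,e,\tau)$ the conditions ``$c$'s final score $\ge\tau$'' and ``$d$'s final margin against $e$ is $\le\tau$'' become, via the formula above, the fixed feasibility check $M_c\ge\tau$ together with the two requirements
\[
  x_c \ge r_c := \max(0, \tau - N_E(c,d)), \qquad x_e \ge r_e := \max(0, N_E(d,e) - \tau).
\]
A short two-way argument, using the monotonicity fact, shows that the original instance is a ``yes''-instance if and only if for some triple $(c,e,\tau)$ we have $M_c\ge\tau$ and there is a bribery of cost at most $B$ meeting both requirements: from a successful bribery one recovers a triple by letting $\tau=\score_{E'}(d)$ and $e$ be the opponent realizing it, and conversely the two requirements force $\score_{E'}(c)\ge\tau\ge N_{E'}(d,e)\ge\score_{E'}(d)$.

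It then remains to solve, for each of the $O(m^2 n)$ triples, the min-cost subproblem of shifting $d$ backward so that it passes $c$ in at least $r_c$ votes and passes $e$ in at least $r_e$ votes. In any single vote $v_j$ the candidates passed form a prefix of those initially ranked behind $d$, so only a constant number of shift amounts are ever worth considering---do nothing, shift just far enough to pass $c$, just far enough to pass $e$, or far enough to pass both---each with a cost read off from $\rho_j$ and a contribution vector in $\{0,1\}^2$ to the counts $(x_c,x_e)$. I would then run a two-dimensional dynamic program over the votes whose state records how many times $c$ and $e$ have been passed so far, each capped at $r_c$ and $r_e$ respectively; the table has size $O(n^2)$, each vote contributes $O(1)$ transitions, and reading off the entry for $(r_c,r_e)$ and comparing it with $B$ finishes one triple in $O(n^3)$ time, hence $O(m^2 n^4)$ overall.

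I expect the only genuinely delicate point to be that $d$'s Maximin score is a minimum rather than a linear quantity, so a naive dynamic program cannot track it directly. The resolution is precisely the guess of the nemesis $e$, which replaces the minimum by a single margin, combined with the observation that passing a candidate is purely beneficial so that over-shooting is never needed. The remaining subtlety---that one shift in a vote may simultaneously help both the $x_c$ and the $x_e$ requirement (when $c$ and $e$ lie together behind $d$), so that greedy coverage can fail---is exactly what the two-dimensional dynamic program is there to absorb; the degenerate case $e=c$ collapses to the one-dimensional requirement $x_c\ge\max(r_c,r_e)$ and is handled by the same program.
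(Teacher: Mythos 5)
Your proposal is correct and follows essentially the same route as the paper: the paper likewise guesses the rescuing candidate $w$ and the candidate $t$ witnessing $d$'s final minimum, restricts attention to ``tight'' shifts that move $d$ just past $w$ and/or $t$, and runs a two-dimensional dynamic program $f_{w,t}(j,x,y)$ over the voters counting how many times $d$ has passed each of the two. Your extra guess of the threshold $\tau$ is only a cosmetic repackaging of the paper's final step of enumerating all achievable pairs $(x,y)$ and checking the resulting scores directly.
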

\newcommand{\propnoun}{tightness}
\newcommand{\propadj}{tight}
\begin{proof}
  Let $E = (C,V)$ be the input election with
  $C = \{c_1, c_2, \ldots, c_m\}$ and $V = \{v_1, v_2, \ldots, v_n\}$, let
  $d \in C$ be the despised candidate, let $B$ be the budget, and let
  $\{\rho_1, \rho_2, \ldots, \rho_n\}$ be the destructive shift-bribery price
  functions for the voters. A solution to \dsb{} is a vector
  $S=(s_1, s_2, \ldots, s_n)$ of $n$ integers such that candidate $d$ does
  not win in the election resulting from bribing each voter
  $v_i \in V$ to shift $d$ back by $s_i$ positions. Naturally, $S$ is
  a solution if the price of all shifts it describes does not exceed
  the budget. Fix some solution $S$ and the resulting election
  $E' = (C, V')$, where $V'=\{v'_1, v'_2, \ldots, v'_n\}$. In election
  $E'$ there are always two important candidates: candidate $w$,
  whose score is at least as high as that of $d$ in $E'$, and candidate
  $t$ such that $\score_{E'}(d) = N_{E'}(d,t)$.
  Intuitively, $w$ is the candidate that ensures that $d$ is not a
  unique winner, and $t$ is the candidate that ``implements'' the
  score of $d$ in election $E'$. Note that it is possible that $w=t$.
  We call a solution $S = (s_1, s_2, \ldots, s_n)$ \emph{\propadj{}} if for
  every voter $v'_i$ such that $s_i \neq 0$, it holds that either
  $v'_i$ ranks $d$ just below $w$ or $v'_i$ ranks $d$ just below $t$.
  Intuitively, the solution is \propadj{} if we do not waste the
  budget to make unnecessary shifts that do not affect the relative
  order of $d$, $w$ and $t$. Note that we are able to modify any
  solution $S=(s_1, s_2, \ldots, s_n)$ so that it becomes \propadj{},
  by undoing, for every voter, as many shifts as possible without changing the relative
  positions of $d$, $w$, and $t$.  $S'$ is a solution because $d$, $w$, and $t$ have
  the same scores after applying $S'$ as after applying $S$. More so, the
  cost of $S'$ is at most as high as that of~$S$.
  Thus it suffices to focus on \propadj{} solutions.


  Our algorithm tries each pair of $t$ and $w$ and checks if there is
  a \propadj{} solution for them with cost at most~$B$. It accepts if
  so, and it rejects if there is no solution with cost at most $B$ for
  any choice of $t$ and $w$.  Since there are at most $O(m^2)$ pairs
  of candidates to try, it suffices to show a polynomial-time
  algorithm for computing the cheapest tight solution for given $t$
  and $w$. Below we focus on the case where $t \neq w$; the case where
  $t = w$ is analogous.

  Let us fix candidates $t$ and $w$, $t \neq w$.  For each two
  candidates $c_1, c_2$, we write $\preffunc(c_1,c_2)$ to denote the
  set of voters (from $V$) that prefer $c_1$ over $c_2$. Let
  $\price(v,c)$ be the cost of shifting candidate $d$ just below $c$
  in $v$'s preference order. Since we are interested in changing the
  relative positions of candidates $t$, $w$ and $d$, we focus on
  voters in the set
  $\preffunc(d,w) \cup \preffunc(d,t) = \{v''_1, v''_2, \cdots, v''_\ell \}$
  (i.e., voters who prefer $d$ to at least one of $w$ and $t$). We
  define function $f_{w,t}$ such that $f_{w,t}(j,x,y)$ is the lowest
  cost for shifting $d$ backward in the preference orders of the
  voters from $\{v''_1, v''_2, \ldots, v''_j\}$ in a way that ensures that
  $d$~moves~$x$~times below $w$ and $y$ times below $t$ (observe that
  $j, x, y \in \{0, 1, \ldots, \ell\}$). With function $f_{w,t}$, we can
  compute the cost of the cheapest tight solution for candidates $w$~and~$t$. It
  suffices to try all values of $f_{w,t}(\ell, x, y)$ for pairs $x, y \in \{0,
  1, \ldots, \ell\}$. For each pair it is easy to
  compute the scores that candidates $d$ and $w$ get after shifting
  $d$ back $x$~times behind~$w$~and~$y$ times behind $t$. If $w$ has
  at least as high score as $d$ and the cost is at most $B$, then we
  accept.
  
  We give a recursive formula for function $f_{w,t}$ that allows
  to compute the function's values in polynomial-time, by using
  standard dynamic programming techniques. For each
  $j,x,y \in \{0, 1, \ldots, \ell\}$, we have $f_{w,t}(j,0,0) = 0$. If
  $x > 0$ or $y > 0$, then $f_{w,t}(0,x,y) = \infty$ to indicate the
  impossibility of making shifts with no budget.\footnote{If the
    bribery price functions allow for shifting $d$ back at zero cost
    in some votes, we shift $d$ as much as possible at zero cost as a
    preprocessing step.} For other values of the arguments, a more
  involved discussion is necessary. Let us fix some values of
  $j \in [\ell]$, and $x,y \in \{0, 1, \ldots, \ell\}$. There are three
  cases to consider: \medskip

  \noindent
  \textbf{Case 1:}\quad If $v''_j \in \preffunc(d,w) \setminus
  \preffunc(d,t)$, that is, if $v''_j$ prefers $d$ to $w$, but not to
  $t$, then we have:
  \begin{displaymath}
    f_{w,t}(j,x,y) = 
    \min\left\{
      \begin{array}{l}
        f_{w,t}(j-1,x,y), \\
        f_{w,t}(j-1,x-1,y)+\price(v''_j, w)
      \end{array} \right\}. 
  \end{displaymath}
  To see the correctness of the formula, note that to achieve the fact
  that $d$ moves $x$ times behind $w$ and $y$ times below $t$ for the
  voters $v''_1, v''_2, \ldots, v''_j$, we either ensure that this happens
  already for the voters $v''_1, v''_2, \ldots, v''_{j-1}$ and leave $v''_j$
  intact, or we ensure that $d$~moves~${x-1}$~times below $w$~and~$y$
  times below $t$ for voters $v''_1, v''_2, \ldots, v''_{j-1}$ and shift $d$
  back behind $w$ in $v''_j$'s preference order (we omit such detailed
  descriptions below, but the general idea for each of the cases is
  the same).\medskip

  \noindent\textbf{Case 2:}\quad If $v''_j$ is in $\preffunc(d,t) \setminus \preffunc(d,w)$,
   then we have:
  \begin{displaymath}
    f_{w,t}(j,x,y) = 
    \min\left\{
      \begin{array}{l}
        f_{w,t}(j-1,x,y), \\
        f_{w,t}(j-1,x,y-1)+\price(v''_j, t)
      \end{array} \right\}. 
  \end{displaymath}

  \noindent\textbf{Case 3:}\quad If $v''_j$ is in $\preffunc(d,w) \cap \preffunc(d,t)$
  and $v''_j$ prefers $d$ to $w$ and $w$ to $t$, then we have:
  \begin{displaymath}
    f_{w,t}(j,x,y) = 
    \min\left\{
      \begin{array}{l}
        f_{w,t}(j-1,x,y), \\
        f_{w,t}(j-1,x-1,y)+\price(v''_j, w), \\
        f_{w,t}(j-1,x-1,y-1)+\price(v''_j, t)\\
      \end{array}
    \right\}.
  \end{displaymath}
  On the other hand, if $v''_j$ prefers $d$ to $t$ and $t$ to $w$ then
  we have:
  \begin{displaymath}
    f_{w,t}(j,x,y) = 
    \min\left\{
      \begin{array}{l}
        f_{w,t}(j-1,x,y), \\
        f_{w,t}(j-1,x,y-1)+\price(v''_j, t), \\
        f_{w,t}(j-1,x-1,y-1)+\price(v''_j, w)\\
      \end{array}
    \right\}.
  \end{displaymath}

  Given this discussion, using standard dynamic-programming approach it is
  possible to compute the values of functions $f_{w,t}$ in polynomial time. For the case of $w
  = t$, one has to slightly adapt all possible cases when computing function
  $f_{w,t}$.
\end{proof}

It is interesting to ask what feature of the Maximin rule---as opposed
to the Copeland rule---leads to the fact that \textsc{Destructive
  Shift Bribery} is polynomial-time solvable. We believe that the
reason is that it is safe to focus on a small number of candidates
(the candidates $w$ and $t$). For Copeland elections, on the other
hand, one has to keep track of all the candidates that the despised
candidate passes, because each such pass could, in effect, decrease
the despised candidate's score. (Interestingly, the same is true for
the Borda rule---each time the despised candidate passes a candidate,
the despised candidate's score decreases. However, in the case of
Borda this process is unconditional, whereas in the case of Copeland's
rule, the decrease may or may not happen, depending on shifts in other
preference orders).

\section{Conclusions}\label{sec:conclusions}

We have introduced and studied a destructive variant of the
\textsc{Shift Bribery} problem. In our problem, we ask if it is
possible to preclude a given candidate from being the winner of an
election by shifting this candidate backward in some of the votes (at
a given cost, within a given budget), whereas in the constructive
variant of the problem one asks if it is possible to ensure a given
candidate's victory by shifting him or her forward. 


\begin{table}
\begin{center}
\begin{tabular}{r|cc}
  \toprule
  & \textsc{Constructive} &  \textsc{Destructive} \\
  Election rule\phantom{$^\alpha$} & \textsc{Shift Bribery} & \textsc{Shift Bribery} \\
  \midrule
  Plurality\phantom{$^\alpha$}     &   $\p$  & $\p$  \\
  $k$-Approval\phantom{$^\alpha$}  &   $\p$  & $\p$  \\
  Borda\phantom{$^\alpha$}         &   $\np$-com.  & $\p$  \\
  Maximin\phantom{$^\alpha$}       &   $\np$-com.  & $\p$  \\
  Copeland$^\alpha$&   $\np$-com.  & $\np$-com.  \\
  \bottomrule
\end{tabular}
\end{center}
\caption{\label{tab:results}The complexity of \textsc{Shift Bribery} for
  various election rules. The results for the constructive case are due
  to Elkind et al.~\protect\cite{elk-fal-sli:c:swap-bribery} and the results regarding the destructive
  case are due to this paper.}
\end{table}

\begin{table}
\begin{center}
\begin{tabular}{r|cc}
  \toprule
  & \textsc{Constructive} &  \textsc{Destructive} \\
  Parameterization & \textsc{Shift Bribery} & \textsc{Shift Bribery} \\
  \midrule
  \#voters     &   $\wone{}$-h$^\spadesuit$ & $\wone{}$-h  \\
  \#voters $+$ budget    &   & $\fpt{}$  \\
  \#voters (all-or-nothing)    &   $\fpt{}^\spadesuit$  & $\fpt{}$  \\
  \#candidates  &   $\fpt{}^\blacklozenge$  & $\fpt{}^\blacklozenge$  \\
  budget       &   $\wtwo{}$-h$^\spadesuit$& $\wone{}$-h  \\
  \#affected voters&   $\wtwo{}$-h$^\spadesuit$  & $\wone{}$-h  \\
  \bottomrule
\end{tabular}
\end{center}
\caption{\label{tab:results-par}The complexity of \textsc{Shift Bribery} for
  Copeland$^\alpha$. The results marked with $^\blacklozenge$ are due to Knop et
  al.~\cite{KKM17} and the resuts marked with $^\spadesuit$
  follow from the work of Bredereck et
  al.~\cite{bre-che-fal-nic-nie:j:prices-matter}. The results regarding the
  destructive cases are due to this paper. The empty cell indicates that we are not
  aware of any work on this particular variant.}
\end{table}

We have shown that \textsc{Destructive Shift Bribery} is
polynomial-time solvable for the $k$-Approval family of rules (in
effect, including the Plurality rule), the Borda rule, all scoring
protocols (as long as either the protocol or the price functions can
be assumed to be encoded in unary) the Simplified Bucklin rule, the
Bucklin rule, and the Maximin rule. On the other hand, we have shown
that for each rational value of $\alpha$, the problem is
$\np$-complete for Copeland$^\alpha$. We have investigated the
problem's parameterized complexity in this case showing that it
remains hard for the case of small budgets and for the case of few
voters, even under unit price functions. However, the problem is in
FPT for the case of few candidates~\cite{KKM17}. We summarize our
results on general complexity in Table~\ref{tab:results}, whereas
Table~\ref{tab:results-par} contains the parameterized complexity
results for Copeland$^\alpha$.

Our work leads to several open questions. First, one could always
study more election rules. Second, one can analyze the robustness of
various election rules based on the number of backward shifts of the
winner needed to change their
outcome~\cite{shir-yu-elkind:c:robust,BFKNST17}. This direction can
also be seen as studying a more fine-grained extension of the
\textsc{Margin of Victory} problem. Third, it would be interesting to
perform an empirical test to measure how much we need to shift back
the election winner to change the result under various assumptions
regarding the voters' preference orders (and in real-life elections,
such as those collected in PrefLib~\cite{mat-wal:c:preflib}) to
complement the theoretical analysis mentioned in the second idea.

\section*{Acknowledgments}
We are grateful to AAMAS reviewers for their helpful comments. Piotr Faliszewski
was supported by the AGH University grant 11.11.230.124 (statutory research).
Andrzej Kaczmarczyk was partially supported by the AGH University grant
11.11.230.124 (statutory research) and partially by the DFG project AFFA (BR
5207/1 and NI 369/15).

\bibliographystyle{abbrv}

\begin{thebibliography}{10}

\bibitem{bau-erd-rot:c:bribery-ja}
D.~Baumeister, G.~Erd\'elyi, and J.~Rothe.
\newblock How hard is it to bribe judges? {A} study of the complexity of
  bribery in judgement aggregation.
\newblock In {\em Proceedings of the 2nd International Conference on
  Algorithmic Decision Theory}, pages 1--15. Springer-Verlag {\it Lecture Notes
  in Computer Science \#6992}, Oct. 2011.

\bibitem{bau-fal-lan-rot:c:lazy-voters}
D.~Baumeister, P.~Faliszewski, J.~Lang, and J.~Rothe.
\newblock Campaigns for lazy voters: {T}runcated ballots.
\newblock In {\em Proceedings of AAMAS-12}, pages 577--584. International
  Foundation for Autonomous Agents and Multiagent Systems, June 2012.

\bibitem{erd-fer-gol-mat-rei-rot:j:probabilistic-lobbying}
D.~Binkele-Raible, G.~Erd{\'e}lyi, H.~Fernau, J.~Goldsmith, N.~Mattei, and
  J.~Rothe.
\newblock The complexity of probabilistic lobbying.
\newblock {\em Discrete Optimization}, 11:1--21, 2014.

\bibitem{bre-che-fal-nic-nie:j:prices-matter}
R.~Bredereck, J.~Chen, P.~Faliszewski, A.~Nichterlein, and R.~Niedermeier.
\newblock Prices matter for the parameterized complexity of shift bribery.
\newblock {\em Information and Computation}, 251:140--164, 2016.

\bibitem{BFKNST17}
R.~Bredereck, P.~Faliszewski, A.~Kaczmarczyk, R.~Niedermeier, P.~Skowron, and
  N.~Talmon.
\newblock Robustness among multiwinner voting rules.
\newblock In {\em Proceedins of 10th International Symposium on Algorithmic
  Game Theory}, pages 80--92, 2017.

\bibitem{bre-fal-nie-tal:j:csb}
R.~Bredereck, P.~Faliszewski, R.~Niedermeier, and N.~Talmon.
\newblock Large-scale election campaigns: Combinatorial shift bribery.
\newblock {\em Journal of Artificial Intelligence Research}, 55:603--652, 2016.

\bibitem{car:c:margin-of-victory}
D.~Cary.
\newblock Estimating the margin of victory for instant-runoff voting.
\newblock Presented at the 2011 Electronic Voting Technology Workshop/Workshop
  on Trustworthy Elections, 2011.

\bibitem{chr-fel-ros-sli:j:lobbying}
R.~Christian, M.~Fellows, F.~Rosamond, and A.~Slinko.
\newblock On complexity of lobbying in multiple referenda.
\newblock {\em Review of Economic Design}, 11(3):217--224, 2007.

\bibitem{con-lan-san:j:when-hard-to-manipulate}
V.~Conitzer, T.~Sandholm, and J.~Lang.
\newblock When are elections with few candidates hard to manipulate?
\newblock {\em Journal of the ACM}, 54(3):Article~14, 2007.

\bibitem{cyg-fom-kow-lok-mar-pil-pil-sau:b:fpt}
M.~Cygan, F.~Fomin, {\L}.~Kowalik, D.~Lokshtanov, D.~Marx, M.~Pilipczuk,
  M.~Pilipczuk, and S.~Saurabh.
\newblock {\em Parameterized Algorithms}.
\newblock Sprigner, 2015.

\bibitem{dey-nar:c:margin-of-victory}
P.~Dey and Y.~Narahari.
\newblock Estimating the margin of victory of an election using sampling.
\newblock In {\em Proceedings of the 24th International Joint Conference on
  Artificial Intelligence}, pages 1120--1126, July 2015.

\bibitem{dor-kru:j:cp-bribery}
B.~Dorn and D.~Kr{\"u}ger.
\newblock On the hardness of bribery variants in voting with {CP}-nets.
\newblock {\em Annals of Mathematics and Artificial Intelligence},
  77(3-4):251--279, 2016.

\bibitem{dor-sch:j:parameterized-swap-bribery}
B.~Dorn and I.~Schlotter.
\newblock Multivariate complexity analysis of swap bribery.
\newblock {\em Algorithmica}, 64(1):126--151, 2012.

\bibitem{elk-fal:c:shift-bribery}
E.~Elkind and P.~Faliszewski.
\newblock Approximation algorithms for campaign management.
\newblock In {\em Proceedings of the 6th International Workshop On Internet And
  Network Economics}, pages 473--482. Springer-Verlag {\it Lecture Notes in
  Computer Science \#6484}, Dec. 2010.

\bibitem{elk-fal-sko-sli:j:multiwinner-properties}
E.~Elkind, P.~Faliszewski, P.~Skowron, and A.~Slinko.
\newblock Properties of multiwinner voting rules.
\newblock {\em Social Choice and Welfare}, 48(3):599--632, 2017.

\bibitem{elk-fal-sli:c:swap-bribery}
E.~Elkind, P.~Faliszewski, and A.~Slinko.
\newblock Swap bribery.
\newblock In {\em Proceedings of the 2nd International Symposium on Algorithmic
  Game Theory}, pages 299--310. Springer-Verlag {\it Lecture Notes in Computer
  Science \#5814}, Oct. 2009.

\bibitem{fal-hem-hem:j:bribery}
P.~Faliszewski, E.~Hemaspaandra, and L.~Hemaspaandra.
\newblock How hard is bribery in elections?
\newblock {\em Journal of Artificial Intelligence Research}, 35:485--532, 2009.

\bibitem{fal-hem-hem-rot:j:llull}
P.~Faliszewski, E.~Hemaspaandra, L.~Hemaspaandra, and J.~Rothe.
\newblock Llull and {Copeland} voting computationally resist bribery and
  constructive control.
\newblock {\em Journal of Artificial Intelligence Research}, 35:275--341, 2009.

\bibitem{fal-rei-rot-sch:j:bucklin-fallback}
P.~Faliszewski, Y.~Reisch, J.~Rothe, and L.~Schend.
\newblock Complexity of manipulation, bribery, and campaign management in
  {Bucklin} and fallback voting.
\newblock {\em Autonomous Agents and Multiagent Systems}, 29(6):1091--1124,
  2015.

\bibitem{fal-rot:b:bribery}
P.~Faliszewski and J.~Rothe.
\newblock Control and bribery in voting.
\newblock In F.~Brandt, V.~Conitzer, U.~Endriss, J.~Lang, and A.~D. Procaccia,
  editors, {\em Handbook of Computational Social Choice}, chapter~7. Cambridge
  University Press, To appear.

\bibitem{FSST16}
P.~Faliszewski, P.~Skowron, A.~Slinko, and N.~Talmon.
\newblock Committee scoring rules: Axiomatic classification and hierarchy.
\newblock In {\em Proceedings of the 25th International Joint Conference on
  Artificial Intelligence}, pages 250--256, 2016.

\bibitem{haz-lin-kra:c:bribery}
N.~Hazon, R.~Lin, and S.~Kraus.
\newblock How to change a group's collective decision?
\newblock In {\em Proceedings of the 23rd International Joint Conference on
  Artificial Intelligence}, pages 198--205. AAAI Press, 2013.

\bibitem{hem-hem-rot:j:destructive-control}
E.~Hemaspaandra, L.~Hemaspaandra, and J.~Rothe.
\newblock Anyone but him: {The} complexity of precluding an alternative.
\newblock {\em Artificial Intelligence}, 171(5--6):255--285, 2007.

\bibitem{KKM17}
D.~Knop, M.~Kouteck{\'{y}}, and M.~Mnich.
\newblock Voting and bribing in single-exponential time.
\newblock In {\em 34th Symposium on Theoretical Aspects of Computer Science,
  {STACS}}, pages 46:1--46:14, 2017.

\bibitem{mag-riv-she-wag:c:stv-bribery}
T.~Magrino, R.~Rivest, E.~Shen, and D.~Wagner.
\newblock Computing the margin of victory in {IRV} elections.
\newblock Presented at 2011 Electronic Voting Technology Workshop/Workshop on
  Trushworthy Elections, Aug. 2011.

\bibitem{mar-mau-pin-ros-ven:cp-bribery}
A.~Maran, N.~Maudet, M.~Pini, F.~Rossi, and K.~Venable.
\newblock A framework for aggregating influenced {CP}-nets and its resistance
  to bribery.
\newblock In {\em Proceedings of AAAI-13}, pages 668--674. AAAI Press, 2013.

\bibitem{mat-pin-ros-ven:j:cp-bribery}
N.~Mattei, M.~Pini, F.~Rossi, and K.~Venable.
\newblock Bribery in voting with cp-nets.
\newblock {\em Annals of Mathematics and Artificial Intelligence},
  68(1-3):135--160, 2013.

\bibitem{mat-wal:c:preflib}
N.~Mattei and T.~Walsh.
\newblock Preflib: A library for preferences.
\newblock In {\em Proceedings of the 3nd International Conference on
  Algorithmic Decision Theory}, pages 259--270, 2013.

\bibitem{mau-nev-rot-sel:c:dsb}
C.~Maushagen, M.~Nevelling, J.~Rothe, and A.~Selker.
\newblock Complexity of shift bribery in iterative elections.
\newblock In {\em Proceedings of AAMAS-18}, pages 1567--1575, 2018.

\bibitem{neh:c:lobbying}
I.~Nehama.
\newblock Complexity of optimal lobbying in threshold aggregation.
\newblock In {\em Proceedings of the 4th International Conference on
  Algorithmic Decision Theory}, pages 379--395, 2015.

\bibitem{nie:b:invitation-fpt}
R.~Niedermeier.
\newblock {\em Invitation to Fixed-Parameter Algorithms}.
\newblock Oxford University Press, 2006.

\bibitem{obr-elk:c:random-ties-matter}
S.~Obraztsova and E.~Elkind.
\newblock On the complexity of voting manipulation under randomized
  tie-breaking.
\newblock In {\em Proceedings of the 22nd International Joint Conference on
  Artificial Intelligence}, pages 319--324, July 2011.

\bibitem{obr-elk-haz:c:ties-matter}
S.~Obraztsova, E.~Elkind, and N.~Hazon.
\newblock Ties matter: {C}omplexity of voting manipulation revisited.
\newblock In {\em Proceedings of AAMAS-11}, pages 71--78, May 2011.

\bibitem{pap:b:complexity}
C.~Papadimitriou.
\newblock {\em Computational Complexity}.
\newblock Addison-Wesley, 1994.

\bibitem{elk-fal-sch:j:fallback-shift}
I.~Schlotter, P.~Faliszewski, and E.~Elkind.
\newblock Campaign management under approval-driven voting rules.
\newblock {\em Algorithmica}, 77:84--115, 2017.

\bibitem{shir-yu-elkind:c:robust}
D.~Shiryaev, L.~Yu, and E.~Elkind.
\newblock On elections with robust winners.
\newblock In {\em Proceedings of AAMAS-13}, pages 415--422. International
  Foundation for Autonomous Agents and Multiagent Systems, 2013.

\bibitem{xia:margin-of-victory}
L.~Xia.
\newblock Computing the margin of victory for various voting rules.
\newblock In {\em Proceedings of the 13th ACM Conference on Electronic
  Commerce}, pages 982--999. ACM Press, June 2012.

\end{thebibliography}

\end{document}